\documentclass[sigconf, nonacm]{acmart}
\usepackage{microtype}
\usepackage{graphicx}
\usepackage{subcaption}
\usepackage{tikz}
\usetikzlibrary{shapes, arrows.meta, positioning, fit, backgrounds, calc, shadows}
\usepackage{multirow}
\usepackage{array}
\usepackage{float}
\usepackage{algorithmic}
\usepackage{algorithm}
\usepackage{makecell}

\newcommand\vldbdoi{XX.XX/XXX.XX}

\newcommand\vldbavailabilityurl{URL_TO_YOUR_ARTIFACTS}

\begin{document}
\title{Revisiting Privacy Amplification by Subsampling in Selective Release DPSGD}

\author{Xiaobo Huang}
\affiliation{%
  \institution{Guangdong Provincial Key Laboratory of IRADS, Beijing Normal-Hong Kong Baptist University}
  \streetaddress{P.O. Box 1212}
  \city{Zhuhai}
  \state{China}
  \postcode{519087}
}

\author{Fang Xie}
\orcid{}
\affiliation{%
  \institution{Guangdong Provincial Key Laboratory of IRADS, Beijing Normal-Hong Kong Baptist University}
  \streetaddress{P.O. Box 1212}
  \city{Zhuhai}
  \state{China}
  \postcode{519087}
}

\begin{abstract}
Machine learning's reliance on sensitive data necessitates privacy-preserving techniques like Differentially Private Stochastic Gradient Descent (DPSGD). However, DPSGD suffers from substantial utility degradation and slow convergence due to gradient clipping and noise injection. Prior works have attempted to improve DPSGD from various perspectives; notably, the Differentially Private Selective Update and Release (DPSUR) algorithm has achieved remarkable model utility. However, the privacy accounting in DPSUR overlooks the variation in sampling probability introduced by the selective release mechanism, which compromises the rigor of its privacy guarantees. To address these limitations, we re-evaluate the privacy analysis of the selective release mechanism and propose a novel algorithm: Differentially Private Selective Release based on Clipped Gradients (DPSR-CG). Through a rigorous, newly derived privacy analysis and extensive experiments on multiple datasets (MNIST, CIFAR-10, IMDB, and FMNIST), we demonstrate that our DPSR-CG mechanism maintains strict privacy guarantees while achieving exceptional model performance.
\end{abstract}

\maketitle


\ifdefempty{\vldbavailabilityurl}{}{
\vspace{.3cm}
\begingroup\small\noindent\raggedright\textbf{Artifact Availability:}\\
{The source code, data, and/or other artifacts have been made available at \url{ https://github.com/FangXieLab/DPSR-CB}.}
\endgroup
}

\section{Introduction}
Deep learning has achieved immense success in real-world tasks, yet it entails significant privacy risks. Adversaries can invert models to recover sensitive training data, prompting strict regulations such as GDPR, CCPA, and HIPAA \citep{protection2018general, cummings2018role, annas2003hipaa}.

Among privacy-preserving paradigms, Differentially Private Stochastic Gradient Descent (DPSGD) \citep{abadi2016deep} has become the de facto baseline. Despite its popularity, DPSGD suffers from inherent limitations: (1) gradient clipping discards valuable signals from distinct samples, introducing significant bias; (2) Gaussian noise injection introduces high variance that destabilizes convergence; (3) Poisson sampling increases computational complexity; and (4) the algorithm is highly sensitive to hyperparameters, particularly the clipping threshold.

Prior works have attempted to mitigate these issues through sampling optimization \citep{wei2022dpis}, data-centric approaches \citep{tramer2020differentially}, and activation function selection \citep{papernot2021tempered}. Notably, \citet{fu2023dpsur} introduced the Selective Update and Release (DPSUR) mechanism, significantly narrowing the utility gap. However, DPSUR incurs high computational costs due to frequent validation steps and still suffers from gradient misalignment caused by clipping. More importantly, the privacy analysis in the previously proposed DPSUR does not fully account for the variation in sampling probability introduced by the selective release mechanism, which may compromise the accuracy of their privacy accounting.

To address these challenges, we propose a novel framework: Differentially Private Selective Release based on Clipped Gradients (DPSR-CG). Instead of the restrictive minimal clipping in DPSUR, we utilize gradient scaling \citep{bu2023convergence} to manage update sensitivity. We also apply this scaling methodology to the gradient clipping bias, which is used to determine whether to release the current update. Our main contributions are:
\begin{itemize}
\item We identify a gap in the privacy accounting of the existing selective release algorithm and provide a rigorous privacy analysis to address it.
\item We propose a new differentially private selective release algorithm based on clipping bias, which strictly maintains privacy guarantees.
\item We provide extensive evaluations on MNIST, CIFAR-10, IMDB, and FMNIST. Results demonstrate that DPSR-CG outperforms state-of-the-art solutions in model accuracy.
\end{itemize}

\section{Preliminary}
\label{sec:preliminary}

In this section, we introduce the fundamental concepts of Differential Privacy, Rényi Differential Privacy, and the standard framework for DPSGD. Furthermore, we discuss the theoretical basis for the Gaussian mechanism with selective release, which serves as the foundation for our proposed method.

\subsection{Differential Privacy}

Differential Privacy (DP) provides a rigorous mathematical framework for quantifying data privacy. It ensures that the output of a randomized algorithm does not significantly depend on the presence or absence of any single individual's data in the input dataset~\cite{dwork2014algorithmic}.

\begin{definition}[($\epsilon, \delta$)-Differential Privacy]
A randomized mechanism $\mathcal{M}:\mathcal{X}^{n}\rightarrow\mathcal{Y}$ satisfies $(\epsilon, \delta)$-differential privacy if for any two neighboring datasets $D, D^{\prime}\in\mathcal{X}^{n}$ that differ by a single entry, and for any subset of outputs $S\subseteq\mathcal{Y}$, it holds that:
\begin{equation*}
Pr[\mathcal{M}(D)\in S]\le e^{\epsilon}\cdot Pr[\mathcal{M}(D^{\prime})\in S]+\delta,
\end{equation*}
where $\epsilon>0$ represents the privacy budget, and $\delta\ge0$ is the failure probability.
\end{definition}

To achieve DP, noise is typically added proportional to the sensitivity of the function. For a function $f:\mathcal{X}^{n}\rightarrow\mathbb{R}^{d}$, the $l_{2}$-sensitivity is defined as $\Delta_{2}f=\sup_{D,D^{\prime}}||f(D)-f(D^{\prime})||_{2}$.

\subsection{Rényi Differential Privacy}
\label{pre_rdp}
Rényi Differential Privacy (RDP) is a natural relaxation of pure differential privacy based on Rényi divergence. It is particularly advantageous for iterative algorithms like DPSGD, as it provides tighter bounds for sequential composition~\cite{mironov2017renyi}.

\begin{definition}[($\alpha, \epsilon$)-RDP]
A randomized mechanism $\mathcal{M}$ satisfies $(\alpha, \epsilon)$-RDP if for any two neighboring datasets $D, D^{\prime}$, the Rényi divergence of order $\alpha > 1$ between their output distributions satisfies:
\begin{equation*}
D_{\alpha}(\mathcal{M}(D)||\mathcal{M}(D^{\prime}))\le\epsilon.
\end{equation*}
\end{definition}

For a function with $l_2$-sensitivity $\Delta$, the standard Gaussian mechanism $\mathcal{N}(0,\sigma^{2}\Delta^{2}I)$ satisfies $(\alpha, \frac{\alpha}{2\sigma^{2}})$-RDP. In DPSGD, this mechanism is coupled with random data subsampling, forming the Sampled Gaussian Mechanism (SGM). 

\begin{definition}[RDP of Sampled Gaussian Mechanism]
\label{def:rdp_sgm}
Let $SG_{q,\sigma}$ denote the SGM with a subsampling rate $q$ and noise multiplier $\sigma$ applied to a sensitivity-1 query. $SG_{q,\sigma}$ satisfies $(\alpha, R)$-RDP, where:
\begin{equation*}
R \le \frac{1}{\alpha-1} \ln [\max(A_{\alpha}(q,\sigma), B_{\alpha}(q,\sigma))].
\end{equation*}
Here, $A_{\alpha}$ and $B_{\alpha}$ bound the divergence between the baseline and mixed distributions. Following standard procedures~\cite{abadi2016deep, mironov2017renyi}, the dominant moment $A_{\alpha}(q,\sigma)$ can be analytically evaluated via the binomial expansion:
\begin{equation*}
A_{\alpha}(q,\sigma) = \sum_{k=0}^{\alpha}\binom{\alpha}{k}(1-q)^{\alpha-k}q^{k}\exp\left(\frac{k^{2}-k}{2\sigma^{2}}\right).
\end{equation*}
\end{definition}

To provide standard differential privacy guarantees, the accumulated RDP budget across all iterations is ultimately converted to $(\epsilon, \delta)$-DP using the following lemma.

\begin{lemma}[Conversion from RDP to DP~\cite{mironov2017renyi}]
\label{lem:rdp_to_dp}
If a randomized mechanism $f: \mathcal{D} \rightarrow \mathbb{R}$ satisfies $(\alpha, R)$-RDP, it strictly satisfies $(\epsilon, \delta)$-DP for any $0<\delta<1$, where:
\begin{equation*}
\epsilon = R + \frac{\ln(1/\delta) + (\alpha-1)\ln(1-1/\alpha) - \ln(\alpha)}{\alpha-1}.
\end{equation*}
\end{lemma}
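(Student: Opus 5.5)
We will prove the conversion through the standard tail-bound argument for the privacy loss random variable, sharpened in the style of Balle et al.\ / Canonne--Kamath--Steinke rather than Mironov's original Markov step. Fix neighbouring datasets $D,D'$, write $P=\mathcal{M}(D)$, $Q=\mathcal{M}(D')$, and let $L(y)=\ln\frac{P(y)}{Q(y)}$ be the privacy loss. For any measurable $S$ we start from
\begin{equation*}
P(S)-e^{\epsilon}Q(S)\le \int \bigl(P(y)-e^{\epsilon}Q(y)\bigr)_{+}\,dy = \mathbb{E}_{Q}\bigl[(e^{L}-e^{\epsilon})_{+}\bigr].
\end{equation*}
It therefore suffices to show that this quantity is at most $\delta$ for the stated $\epsilon$.

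The key step is a pointwise inequality: for every real $z$,
\begin{equation*}
(e^{z}-e^{\epsilon})_{+}\le c\, e^{\alpha z},\qquad c=\sup_{z}\frac{(e^{z}-e^{\epsilon})_{+}}{e^{\alpha z}}.
\end{equation*}
Setting $x=e^{z}$, we maximise $(x-e^{\epsilon})x^{-\alpha}$ over $x>e^{\epsilon}$. Differentiating gives the maximiser $x^{*}=\frac{\alpha}{\alpha-1}e^{\epsilon}$, and hence
\begin{equation*}
c=\frac{e^{\epsilon}}{\alpha-1}\Bigl(\frac{\alpha-1}{\alpha}e^{-\epsilon}\Bigr)^{\alpha}=\frac{e^{-(\alpha-1)\epsilon}}{\alpha}\Bigl(1-\frac{1}{\alpha}\Bigr)^{\alpha-1}.
\end{equation*}
Taking expectations under $Q$ and using $\mathbb{E}_{Q}[e^{\alpha L}]=\exp\bigl((\alpha-1)D_{\alpha}(P\|Q)\bigr)\le e^{(\alpha-1)R}$ by the $(\alpha,R)$-RDP assumption, we obtain
\begin{equation*}
P(S)-e^{\epsilon}Q(S)\le \exp\Bigl((\alpha-1)(R-\epsilon)+(\alpha-1)\ln\bigl(1-\tfrac{1}{\alpha}\bigr)-\ln\alpha\Bigr).
\end{equation*}

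Setting the right-hand side equal to $\delta$ and solving for $\epsilon$ gives exactly
\begin{equation*}
\epsilon=R+\frac{\ln(1/\delta)+(\alpha-1)\ln(1-1/\alpha)-\ln\alpha}{\alpha-1},
\end{equation*}
which completes the argument. The only delicate points are the following.
\begin{itemize}
\item The first display uses the positive-part representation of the hockey-stick divergence. Absolute continuity is not an issue here: if $P\not\ll Q$ then $D_\alpha=\infty$, which contradicts finite $R$.
\item The optimisation for $c$ must be checked to be a maximum, not merely a stationary point. This holds because the function vanishes at $x=e^{\epsilon}$ and tends to $0$ as $x\to\infty$ (since $\alpha>1$).
\end{itemize}

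I expect the main obstacle to be only bookkeeping: arranging the constants so that the $(\alpha-1)\ln(1-1/\alpha)-\ln\alpha$ correction appears with the correct signs. That correction is what distinguishes this bound from the cruder $\epsilon=R+\frac{\ln(1/\delta)}{\alpha-1}$.
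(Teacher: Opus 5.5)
Your proof is correct. The hockey-stick bound $P(S)-e^{\epsilon}Q(S)\le \mathbb{E}_Q[(e^{L}-e^{\epsilon})_+]$ holds, with absolute continuity guaranteed by $D_\alpha<\infty$ for $\alpha>1$. Combining it with the optimal constant $c=\frac{1}{\alpha}(1-\frac{1}{\alpha})^{\alpha-1}e^{-(\alpha-1)\epsilon}$ and the identity $\mathbb{E}_Q[e^{\alpha L}]=e^{(\alpha-1)D_\alpha(P\|Q)}$ gives exactly the stated $\epsilon$.

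The paper gives no proof of its own; it only cites \cite{mironov2017renyi}. That source, however, supports a weaker statement. Mironov's Proposition~3 establishes only $\epsilon=R+\ln(1/\delta)/(\alpha-1)$. It uses the probability-preservation (H\"older) inequality $P(S)\le (e^{R}Q(S))^{(\alpha-1)/\alpha}$ and a case split on whether $e^{R}Q(S)$ exceeds $\delta^{\alpha/(\alpha-1)}$. The same bound follows from a Markov estimate of $\Pr_P[L>\epsilon]$. In your framework, that amounts to replacing $\mathbb{E}_Q[(e^{L}-e^{\epsilon})_+]=\mathbb{E}_P[(1-e^{\epsilon-L})_+]$ by the cruder $\mathbb{E}_P[\mathbf{1}[L>\epsilon]]$.

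That event-wise route is shorter and works set by set. But it discards exactly the factor your optimized pointwise bound keeps, so it cannot produce the negative correction $(\alpha-1)\ln(1-1/\alpha)-\ln\alpha$. The formula as stated is the sharpened conversion of Balle et al.\ and Canonne--Kamath--Steinke (2020), and your argument is essentially the latter's proof. So your route is not just an alternative: it is the argument needed to justify the constant the paper uses, whereas the paper's citation points to a strictly weaker result.
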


\subsection{Deep Learning with Differential Privacy}

Differentially Private Stochastic Gradient Descent (DPSGD)~\cite{abadi2016deep} is the standard algorithm for training deep neural networks with privacy guarantees. It modifies the standard SGD by clipping per-sample gradients and adding Gaussian noise.

In each iteration $t$, a mini-batch $\mathcal{B}_{t}$ is sampled. For each sample $x_{i}\in\mathcal{B}_{t}$, the gradient $g_{t}(x_{i})=\nabla_{\theta}\mathcal{L}(\theta,x_{i})$ is computed and clipped to bound the sensitivity:
\begin{equation*}
\bar{g}_t(x_i) = g_t(x_i) / \max \left(1, \frac{||g_t(x_i)||_2}{C} \right),
\end{equation*}
where $C$ is the clipping threshold. Gaussian noise is then added to the sum of clipped gradients:
\begin{equation*}
\tilde{g}_{t}=\frac{1}{|\mathcal{B}_{t}|}\left(\sum_{x_{i}\in\mathcal{B}_{t}}\bar{g}_{t}(x_{i})+\mathcal{N}(0,\sigma^{2}C^{2}I)\right).
\end{equation*}
The model parameters are updated using $\tilde{g}_{t}$.

\subsection{Gaussian Mechanism with Selective Release}

In advanced frameworks (e.g., DPSUR~\cite{fu2023dpsur}), mechanisms may choose to release the noisy output only if it falls within a specific range. It has been proven that such selective release mechanisms maintain the same RDP guarantee as the standard Gaussian mechanism, provided the selection decision is based on privatized values.

\begin{theorem}[RDP of Selective Gaussian Mechanism~\cite{fu2023dpsur}]
\label{thm:selective_release}
Let $\mathcal{M}$ be a mechanism that releases $f(D) + \mathcal{N}(0, \sigma^2)$ only if the output falls within a truncation interval (e.g., $(a, b]$). This mechanism satisfies $(\alpha, \alpha/2\sigma^2)$-RDP, which is identical to the untruncated Gaussian mechanism.
\end{theorem}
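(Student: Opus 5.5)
The plan is to recognize the selective-release mechanism as a deterministic post-processing of the ordinary Gaussian mechanism. Then Rényi divergence can only shrink, by the data-processing inequality. Concretely, let $Y = f(D) + \mathcal{N}(0,\sigma^2)$ and $Y' = f(D') + \mathcal{N}(0,\sigma^2)$, with sensitivity normalized to $1$ as in the untruncated statement. Define the map $g:\mathbb{R}\to\mathbb{R}\cup\{\bot\}$ by $g(y)=y$ if $y\in(a,b]$ and $g(y)=\bot$ otherwise. Here $\bot$ is the ``do not release'' symbol; it may equally carry the previous, already-public model state, which does not depend on the current noise draw. The mechanism $\mathcal{M}$ outputs exactly $g(Y)$. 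Crucially, $g$ depends only on the privatized value $Y$ and on the public thresholds $a,b$, not on $D$. This is precisely the hypothesis that the selection decision is based on privatized values.

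First I will write the output law of $\mathcal{M}(D)$ explicitly. It is a mixed measure: density $p(y)$ (the Gaussian density centred at $f(D)$) on $(a,b]$, plus an atom at $\bot$ of mass $P_\bot = \int_{\mathbb{R}\setminus(a,b]} p(y)\,dy$. The law of $\mathcal{M}(D')$ has the same form with $q$ and $Q_\bot$. The Rényi divergence then splits as
\begin{equation*}
e^{(\alpha-1)D_\alpha} = \int_a^b p^\alpha q^{1-\alpha}\,dy + P_\bot^{\alpha} Q_\bot^{1-\alpha}.
\end{equation*}

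The only nontrivial step is to bound the atom term by the corresponding integral over the complement. The inequality needed is
\begin{equation*}
\Bigl(\int_S p\Bigr)^{\alpha}\Bigl(\int_S q\Bigr)^{1-\alpha} \le \int_S p^{\alpha}q^{1-\alpha},
\qquad S=\mathbb{R}\setminus(a,b].
\end{equation*}
This is the data-processing inequality restricted to a set. I would prove it by writing $p^\alpha q^{1-\alpha} = q\,(p/q)^\alpha$ and applying Jensen's inequality to the convex function $t\mapsto t^\alpha$ ($\alpha>1$) under the probability measure $q\,dy/Q_\bot$ on $S$. Equivalently, one can apply Hölder's inequality with exponents $\alpha$ and $\alpha/(\alpha-1)$. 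I expect this to be the main (though standard) obstacle, together with checking that the degenerate cases $Q_\bot=0$ or $P_\bot=0$ cause no trouble.

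Adding the two pieces gives
\begin{equation*}
e^{(\alpha-1)D_\alpha(\mathcal{M}(D)\|\mathcal{M}(D'))} \le \int_{\mathbb{R}} p^\alpha q^{1-\alpha}\,dy = e^{(\alpha-1)D_\alpha(Y\|Y')}.
\end{equation*}
By the Gaussian RDP bound recalled in Section~\ref{pre_rdp}, the right-hand side is at most $e^{(\alpha-1)\alpha/(2\sigma^2)}$. Taking logarithms and dividing by $\alpha-1$ yields $(\alpha,\alpha/2\sigma^2)$-RDP, identical to the untruncated Gaussian mechanism.

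The same argument extends verbatim to any measurable release region, and to vector-valued outputs. I would remark on this, and also note that the result hinges on the region not depending on the private data, since otherwise $g$ is no longer a fixed post-processing map.
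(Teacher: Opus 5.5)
The paper gives no proof of this theorem; it is imported from DPSUR. So the only question is whether your argument proves it in the sense the paper uses it. For your formalization it does. If a rejection emits an observable symbol $\bot$ (or the unchanged previous model), then $\mathcal{M}=g(Y)$ with $g$ independent of $D$, and your Jensen bound on the atom is the data-processing inequality. But that is not the reading the statement has here. It speaks of a \emph{truncation} interval and compares with the \emph{untruncated} Gaussian. It is also used to conclude that rejected steps cost nothing, under the paper's threat model (see its SVT discussion), where rejection signals are never revealed and the algorithm simply retries. In your $\bot$ model every invocation, accepted or not, is an $(\alpha,\alpha/2\sigma^2)$-RDP release, so no such saving follows. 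What must be bounded is the law of the value eventually released: the truncated Gaussian with density $p\,\mathbf{1}_{(a,b]}/m(\mu)$, where $m(\nu)=\mathbb{P}(\mathcal{N}(\nu,\sigma^2)\in(a,b])$ and $\mu=f(D)$. This is not of the form $g(Y)$, because the normalizer depends on $D$. Conditioning is not a data-independent Markov kernel, so your data-processing step does not apply. Your closing remark about arbitrary measurable regions also fails here. Take the region $\{|y|\ge L\}$ with $\sigma=1$ and means $0$ and $1$. The two conditional laws put mass $1/2$ and at most $e^{-2L}$ on $\{y\le -L\}$, so $D_\alpha\ge 2L-\frac{\alpha}{\alpha-1}\ln 2$, which is unbounded in $L$.

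For intervals the truncated statement is still true, but it needs a Gaussian-specific ingredient your proposal lacks. Put $\mu'=f(D')$, $\Delta=|\mu-\mu'|\le 1$, $\mu_r=\alpha\mu+(1-\alpha)\mu'$, and let $r$ be the $\mathcal{N}(\mu_r,\sigma^2)$ density. Completing the square gives $p^{\alpha}q^{1-\alpha}=e^{\alpha(\alpha-1)\Delta^2/2\sigma^2}\,r$, hence
\[
e^{(\alpha-1)D_\alpha(\mathcal{M}(D)\|\mathcal{M}(D'))}=\frac{\int_a^b p^{\alpha}q^{1-\alpha}\,dy}{m(\mu)^{\alpha}\,m(\mu')^{1-\alpha}}=e^{\alpha(\alpha-1)\Delta^2/2\sigma^2}\,\frac{m(\mu_r)\,m(\mu')^{\alpha-1}}{m(\mu)^{\alpha}}.
\]
Now $\mu=\frac{1}{\alpha}\mu_r+\frac{\alpha-1}{\alpha}\mu'$, and $m$ is log-concave: it is the convolution of the indicator of a convex set with a Gaussian density, so Pr\'ekopa's theorem applies. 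Hence $m(\mu)^{\alpha}\ge m(\mu_r)\,m(\mu')^{\alpha-1}$, which yields $D_\alpha\le\alpha\Delta^2/2\sigma^2\le\alpha/2\sigma^2$. Log-concavity is exactly what is lost for non-convex release regions. So either state explicitly that you are proving the observable-rejection version (and that it does not license free rejected steps), or replace the post-processing step with this argument.
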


This property is critical for our proposed DPSR-CG, as it allows the algorithm to discard updates with high clipping bias without incurring additional privacy costs for the rejected steps.

\section{Calibration of Sampling Probability in Selectively Release Paradigm}
Before detailing the privacy analysis of our proposed algorithm DPSR-CG, we first address certain theoretical limitations in the privacy accounting of the existing selective release algorithm, DPSUR. Subsequently, we present a mathematically rigorous privacy accounting framework to formalize the guarantees of our mechanism.

\subsection{Preview of the Previous Selective Release Paradigm}

\begin{figure}[htbp]
    \centering
    \resizebox{0.9\linewidth}{!}{
    \begin{tikzpicture}[
        node distance=0.6cm and 1.2cm,
        font=\sffamily,
        block/.style={
            rectangle, rounded corners=3pt, draw=gray!80, thick, fill=white,
            minimum width=4.5cm,
            minimum height=0.8cm, align=center,
            drop shadow={opacity=0.15, shadow xshift=1pt, shadow yshift=-1pt}
        },
        decision/.style={
            diamond, aspect=2, draw=orange!80, thick, fill=orange!5,
            minimum width=3cm, inner sep=1pt, align=center,
            drop shadow={opacity=0.15}
        },
        arrow/.style={-{Stealth[length=2.5mm]}, thick, rounded corners=5pt, color=black!70},
        label/.style={font=\footnotesize\itshape, color=blue!60!black, midway, right, xshift=2pt},
        status/.style={font=\bfseries\footnotesize, pos=0.5}
    ]

        \pgfdeclarelayer{background}
        \pgfsetlayers{background,main}
        
        \node (in) [block, fill=gray!5, draw=gray!50] {Input: $T$, $Z$, $t=1$, $w_0=\text{Initial}()$};

        \node (s1) [block, below=0.5cm of in] {
            \textbf{Step 1:}\\[3pt]
            Get new model $w_{\text{new}}$ by DPSGD
        };
        
        \node (s2) [block, below=0.5cm of s1] {
            \textbf{Step 2:}\\[3pt]
            Get loss $J(w_{\text{new}})$ and $J(w_{t-1})$
        };
        
        \node (s3) [block, below=0.5cm of s2] {
            \textbf{Step 3:}\\[3pt]
            $\Delta E = J(w_{\text{new}}) - J(w_{t-1})$
        };
        
        \node (s4) [block, below=0.5cm of s3] {
            \textbf{Step 4:}\\[3pt]
            $\widetilde{\Delta E} \xleftarrow{\text{DP processing}} \Delta E$
        };
        
        \node (dec1) [decision, below=0.8cm of s4] {$\widetilde{\Delta E} < Z$?}; 
        
        \node (s5yes) [block, fill=green!5, draw=green!60!black, minimum width=2.8cm, below left=0.6cm and 0.2cm of dec1] {
            \textbf{Accept Update}\\[3pt]
            $w_t = w_{\text{new}}$\\
            $t = t + 1$
        };
        
        \node (s5no)  [block, fill=red!5, draw=red!60!black, minimum width=2.8cm, below right=0.6cm and 0.2cm of dec1] {
            \textbf{Reject Update}\\[3pt]
            $w_t = w_{t-1}$
        };

        \begin{pgfonlayer}{background}
            \node [fit=(s2)(s3)(s4), fill=orange!5, rounded corners, draw=blue!30, dashed, inner sep=6pt] (step234Group) {};
        \end{pgfonlayer}

        \begin{pgfonlayer}{background}
            \node [fit=(dec1)(s5yes)(s5no), fill=orange!5, rounded corners, draw=blue!30, dashed, inner sep=12pt] (step5Group) {};
        \end{pgfonlayer}
        
        \node (dec2) [decision, draw=orange!80, fill=orange!10, below=0.5cm of step5Group] {$t < T$?}; 
        
        \node (out) [block, fill=orange!5, draw=orange!50, below=0.5cm of dec2] {Output model $w_T$};

        \draw [arrow] (in) -- (s1);
        \draw [arrow] (s1) -- (s2);
        \draw [arrow] (s2) -- (s3);
        \draw [arrow] (s3) -- (s4);
        \draw [arrow] (s4) -- (dec1);
        
        \draw [arrow] (dec1.west) -| node[above, status, color=red!60, xshift=8pt] {Yes} (s5yes.north);
        \draw [arrow] (dec1.east) -| node[above, status, color=green!60, xshift=-8pt] {No} (s5no.north);

        \coordinate (merge) at ($(s5yes.south)!0.5!(s5no.south) - (0, 0.4cm)$);
        \draw [thick, rounded corners=5pt, color=black!70] (s5yes.south) |- (merge);
        \draw [thick, rounded corners=5pt, color=black!70] (s5no.south) |- (merge);
        \draw [arrow] (merge) -- (dec2.north);

        \coordinate (loop) at ($(s1.west) - (2.5cm, 0)$); 
        \draw [arrow] (dec2.west) -- node[above, status, color=red!70] {Yes} (dec2.west -| loop) -- (loop) -- (s1.west);

        \draw [arrow] (dec2.south) -- node[right, status, color=green!70] {No} (out.north);

    \end{tikzpicture}
    }
    \caption{Workflow of DPSUR.}
    \label{fig:workflow_dpsur}
\end{figure}

Figure \ref{fig:workflow_dpsur} illustrates the operational pipeline of the DPSUR framework. The algorithm takes several initial inputs: the total number of iterations $T$, the current count of accepted updates $t$, a predefined acceptance threshold $Z$, and an initial model state $w_0$. The procedure follows these sequential steps to output the final model:

\begin{itemize}
    \item \textbf{Step 1:} During each iteration, a data batch is drawn from the training dataset using Poisson sampling. This batch is then utilized to compute a candidate intermediate model, denoted as $w_{new}$, via the standard DPSGD optimizer.
    
    \item \textbf{Step 2:} A distinct validation batch, $\mathcal{B}_v$, is independently sampled from the training data. Both the newly proposed model $w_{new}$ and the previously accepted model $w_{t-1}$ are evaluated on $\mathcal{B}_v$ to determine their respective empirical loss values, $J(w_{new})$ and $J(w_{t-1})$.
    
    \item \textbf{Step 3:} To assess whether the candidate model improves performance, the loss difference is calculated as\\
    $\Delta E = J(w_{new}) - J(w_{t-1})$.
    
    \item \textbf{Step 4:} To enforce differential privacy guarantees during the validation phase, the calculated difference $\Delta E$ is clipped to bound its sensitivity, followed by the injection of calibrated noise. This yields the privatized loss difference, $\widetilde{\Delta E}$.
    
    \item \textbf{Step 5:} The algorithm compares the noisy difference $\widetilde{\Delta E}$ against the predefined threshold $Z$. If $\widetilde{\Delta E} < Z$, the candidate model $w_{new}$ is accepted as the new model state, and the accepted update counter $t$ is incremented by 1. Otherwise, the update is discarded, and the algorithm retains the last accepted model, $w_{t-1}$.
\end{itemize}

\subsection{Review of Privacy Accounting in DPSUR}
Before analyzing the theoretical limitations of existing methods, it is essential to review the baseline privacy accounting mechanism employed by the selective release paradigm, specifically DPSUR. DPSUR optimizes the overall privacy budget by leveraging the Gaussian Mechanism with Selective Release. The core premise of their accounting framework is that the training trajectory only leaks information when a model update is officially accepted and applied.

Specifically, DPSUR tracks the privacy loss through two independent phases for every \textit{accepted} iteration:
\begin{itemize}
    \item \textbf{Training Phase:} The computation of the intermediate candidate model weights using standard DPSGD.
    \item \textbf{Validation Phase:} The evaluation of the clipped loss difference $\widetilde{\Delta E}$ on a separate validation batch to determine update acceptance.
\end{itemize}

According to their theoretical analysis, rejected updates—along with their corresponding intermediate weights and noisy validation metrics—are discarded and kept strictly internal. Therefore, DPSUR assumes that only the $T$ successfully accepted updates consume the privacy budget. For each of these $T$ accepted steps, the privacy leakage is quantified using the standard Sampled Gaussian Mechanism (SGM) under the assumption of a uniform Poisson sampling probability $q = B/N$. The total privacy loss is computed by linearly accumulating the Rényi Differential Privacy (RDP) divergences of these $T$ independent steps from both phases, which is subsequently converted to the standard $(\epsilon, \delta)$-DP guarantee via the composition theorem.

\subsection{Limitations in the Privacy Accounting of DPSUR}

In the selective release paradigm, the algorithm first conducts Poisson sampling to form a data batch, computes the gradients, and injects Gaussian noise. Before updating the model, a validation step is performed to verify whether the validation loss (or clipping bias) decreases. The model update is accepted only if this noisy condition is met. 

Fundamentally, this process satisfies differential privacy (DP) provided the noise level is calibrated to the appropriate privacy parameters (i.e., $\epsilon$ and $\delta$). While the DPSUR framework is highly novel and inspiring, its privacy analysis relies on the assumption that the sampling probability of each data record remains uniformly $q$ across all accepted updates. Furthermore, the analysis does not strictly bound the contribution of a single sample to the validation loss. Instead, it estimates the sensitivity based on the bounded domain of the aggregated validation loss, which does not align with the standard definition of sensitivity in DP.

The assumption of a constant sampling probability warrants closer examination because the decision to accept or reject an update acts as a data-dependent filter. If a batch is formed using an initial sampling probability $q$, the probability that a specific sample is included in an \textit{accepted} batch deviates from $q$. It implicitly transforms into a conditional probability, which we denote as $q_{sup} = \mathbb{P}(z \in \mathcal{B} \mid v> C - \Delta E(\mathcal{B}), \mathcal{F}_{t-1})$, where $\mathcal{F}_{t-1}$ is filtration, which encapsulates all previously accepted model states and historical release decisions. In the subsequent sections, we demonstrate how to rigorously construct and bound this effective sampling probability.

Furthermore, the threat model and background assumptions in previous literature remain ambiguous. Prior works claim that rejecting an update incurs no privacy cost, which implies that the validation metric used for the decision is not released to the adversary (otherwise, the privacy leakage from rejected updates must be accounted for). Paradoxically, their privacy analysis explicitly incorporates the cost of releasing the validation metric, directly contradicting their initial premise.

While existing selective release frameworks,  introduce highly inspiring methodologies for improving model utility in DPSGD, we identify two theoretical nuances that require closer examination to ensure strict and logically consistent privacy guarantees. To address these, we propose the following refinements:

\begin{itemize}
    \item \textbf{Re-evaluating Privacy Accounting:} Prior analyses assume a uniform sampling probability $q$ across accepted updates, overlooking that selective release acts as a data-dependent filter. Furthermore, estimating sensitivity via unbounded validation loss risks underestimating privacy leakage. \\
    \textit{Our Refinement:} We rigorously bound the worst-case effective sampling probability, $q_{sup} = \mathbb{P}(z \in \mathcal{B} \mid v> C - \Delta E(\mathcal{B}), \mathcal{F}_{t-1})$. Coupled with our dual-clipping mechanism, this enforces a strict per-sample sensitivity bound, preventing unconstrained probability inflation.

    \item \textbf{Clarifying the Threat Model:} Previous literature suggests rejected updates incur no privacy cost, yet their accounting occasionally incorporates the release of validation metrics, creating a theoretical misalignment. \\
    \textit{Our Refinement:} We distinguish our context from the traditional Sparse Vector Technique (SVT). Because our practical threat model \textit{never} releases rejection signals or internal noisy metrics to the adversary, unapplied updates genuinely incur zero privacy cost, ensuring a logically consistent accounting baseline.
\end{itemize}

\subsection{Conditional Privacy Loss under Adaptive Filtration}
\label{sec:conditional_privacy}
In standard DPSGD, iterative updates rely on uniform Poisson sampling, allowing for straightforward sequential composition. However, the selective release mechanism in DPSR-CG acts as a data-dependent filter. Consequently, the probability of an update being accepted at step $t$ is no longer independent; it is conditionally dependent on the filtration $\mathcal{F}_{t-1}$, which encapsulates all previously accepted model states and historical release decisions.

To rigorously account for the privacy loss under this adaptive mechanism, we must evaluate the worst-case privacy leakage over all possible historical trajectories in $\mathcal{F}_{t-1}$. Let $\mathcal{B}$ denote a batch sampled via baseline Poisson sampling with rate $q = |\mathcal{B}|/N$. The update is conditionally released if the clipped validation deviation satisfies $\Delta E(\mathcal{B}) + v > C$, where $v \sim \mathcal{N}(0, \sigma_v^2)$.

According to Bayes' theorem, for any sample $z$, its conditional sampling probability $q_{cond}(z)$ in a successfully accepted update, given the historical filtration $\mathcal{F}_{t-1}$, is explicitly expanded as:
\begin{align*}
    q_{cond}(z) &= \mathbb{P}(z \in \mathcal{B} \mid v > C - \Delta E(\mathcal{B}), \mathcal{F}_{t-1}) \\
    &= \frac{\mathbb{P}(v> C - \Delta E(\mathcal{B}) \mid z \in \mathcal{B}, \mathcal{F}_{t-1}) \cdot \mathbb{P}(z \in \mathcal{B} \mid \mathcal{F}_{t-1})}{\mathbb{P}(v> C - \Delta E(\mathcal{B}) \mid \mathcal{F}_{t-1})} \\
\intertext{Since $\mathbb{P}(z \in \mathcal{B} \mid \mathcal{F}_{t-1})$ represents the sampling probability in our implementation, thus $\mathbb{P}(z \in \mathcal{B} \mid \mathcal{F}_{t-1}) = q$,}
    q_{cond}(z) &= q \cdot \frac{\mathbb{P}(v> C - \Delta E(\mathcal{B}) \mid z \in \mathcal{B}, \mathcal{F}_{t-1})}{\mathbb{P}(v> C - \Delta E(\mathcal{B}) \mid \mathcal{F}_{t-1})}.
\end{align*}

The ratio $\rho = \frac{\mathbb{P}(v> C - \Delta E(\mathcal{B}) \mid z \in \mathcal{B}, \mathcal{F}_{t-1})}{\mathbb{P}(v> C - \Delta E(\mathcal{B}) \mid \mathcal{F}_{t-1})}$ quantifies the relative probability inflation caused by the inclusion of sample $z$. To decouple the privacy accounting from the actual, highly complex historical trajectory, we must bound the worst-case effective sampling probability $q_{sup} = q \cdot \rho_{max}$ by evaluating the supremum of $\rho$.

To mathematically decouple the privacy accounting from the intricate dependencies of the historical trajectory $\mathcal{F}_{t-1}$, we parameterize the dynamic threshold gap as a scalar distance $x = C - \Delta E(\mathcal{B})$. This distance represents the exact magnitude of noise $v$ required to trigger a successful release. Given the dual-clipping bounds, $x$ is inherently constrained within $[-C_{max}, C_{max}]$. By abstracting the evaluation metric into this state-independent variable $x$, we can transform the evaluation of the complex conditional probability into a tractable worst-case supremum problem, as formalized in the following theorem:

\begin{theorem}[Filtration-Independent Probability Inflation Bound]
\label{thm:privacy_loss}
Let $x = C - \Delta E(\mathcal{B}) \in [-C_{max}, C_{max}]$ denote the threshold distance required to release an update. The worst-case probability inflation ratio $\rho_{max}$ is strictly bounded by:$$\rho_{max} = \min \left( \frac{1 - \Phi_{\sigma_v}(C_{max} - s)}{1 - \Phi_{\sigma_v}(C_{max})}, \frac{1}{q} \right),$$where $q$ is the baseline Poisson sampling rate, $\Phi_{\sigma_v}$ is the cumulative distribution function of $\mathcal{N}(0, \sigma_v^2)$, and $s$ denotes the maximum sensitivity of the evaluation metric.
\end{theorem}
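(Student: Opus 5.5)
Two bounds will be established separately: the trivial cap $\rho \le 1/q$, and the Gaussian-tail bound. Their minimum then bounds $\rho_{max}$. The trivial cap comes straight from the Bayes expansion preceding the statement. Since $q_{cond}(z) = q\rho$ is a conditional probability, it satisfies $q_{cond}(z)\le 1$, so $\rho \le 1/q$ on every trajectory of $\mathcal{F}_{t-1}$.

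For the nontrivial bound, I will write both conditional acceptance probabilities as averages of the Gaussian tail $\bar\Phi(x) := 1-\Phi_{\sigma_v}(x)$, which equals $\mathbb{P}(v > x)$. Given $\mathcal{F}_{t-1}$, the model state is fixed, so $x$ depends only on the sampled batch. I will decompose $\mathcal{B} = \mathcal{B}' \cup (\mathcal{B}\cap\{z\})$, where $\mathcal{B}'$ is the Poisson sample on the other records, independent of the event $z\in\mathcal{B}$. Writing $x'=x(\mathcal{B}')$ and $x^{+}=x(\mathcal{B}'\cup\{z\})$, the sensitivity assumption (from the dual clipping) gives $|x^{+}-x'|\le s$. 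Then
\begin{align*}
\mathbb{P}(\text{acc}\mid z\in\mathcal{B},\mathcal{F}_{t-1}) &= \mathbb{E}_{\mathcal{B}'}\big[\bar\Phi(x^{+})\big],\\
\mathbb{P}(\text{acc}\mid \mathcal{F}_{t-1}) &= q\,\mathbb{E}_{\mathcal{B}'}\big[\bar\Phi(x^{+})\big] + (1-q)\,\mathbb{E}_{\mathcal{B}'}\big[\bar\Phi(x')\big].
\end{align*}

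To bound the ratio, I will use $\bar\Phi(x^{+})\le \bar\Phi(x'-s)$, which holds because $\bar\Phi$ is decreasing. I then control the pointwise ratio $\bar\Phi(x'-s)/\bar\Phi(x')$ over $x'\in[-C_{max},C_{max}]$. The key fact is that $u\mapsto \bar\Phi(u-s)/\bar\Phi(u)$ is nondecreasing. This follows from log-concavity of the Gaussian tail, equivalently from the Mills-ratio hazard $\phi/\bar\Phi$ being increasing. The pointwise ratio is therefore maximized at $u=C_{max}$, giving
\[
R := \frac{1-\Phi_{\sigma_v}(C_{max}-s)}{1-\Phi_{\sigma_v}(C_{max})}.
\]
Hence $\mathbb{E}[\bar\Phi(x^{+})]\le R\,\mathbb{E}[\bar\Phi(x')]$. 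Writing $a=\mathbb{E}[\bar\Phi(x^{+})]$ and $b=\mathbb{E}[\bar\Phi(x')]$, we get $\rho = a/(qa+(1-q)b)$, and the bound $a\le Rb$ together with monotonicity of this expression in $a/b$ gives $\rho \le R/(qR+1-q)\le R$ since $R\ge1$. Combined with the $1/q$ cap, this yields the claimed $\min$.

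The main obstacle is the boundary issue. The clipped $x$ is confined to $[-C_{max},C_{max}]$, so I must check that evaluating at $x'=C_{max}$ with $x^{+}=C_{max}-s$ stays admissible. If $x^{+}$ is clipped, this only reduces the ratio. The other delicate point is making the independence of $\mathcal{B}'$ from $z$'s inclusion rigorous given the conditioning on $\mathcal{F}_{t-1}$. I would handle it by noting that past acceptance events depend on earlier, independent batch draws, so that, conditional on $\mathcal{F}_{t-1}$, the current Poisson draws remain independent Bernoulli($q$). This is exactly the paper's assertion $\mathbb{P}(z\in\mathcal{B}\mid\mathcal{F}_{t-1})=q$, extended to all records.
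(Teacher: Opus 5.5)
Your argument is correct, but it takes a more careful route than the paper's. The key lemma and the cap are shared; what differs is how the random batch is handled.

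\textbf{The paper's route.} The paper never decomposes the batch. It fixes a single threshold distance $x$ for a $z$-free batch and identifies the denominator $\mathbb{P}(v>x\mid\mathcal{F}_{t-1})$ with $1-\Phi_{\sigma_v}(x)$. It identifies the numerator, in the worst case, with $1-\Phi_{\sigma_v}(x-s)$, so that $\rho$ \emph{is} the pointwise tail ratio. It then takes $\sup_x$ using the same monotonicity fact you justify via the increasing Gaussian hazard rate, evaluates at $x=C_{max}$, and adjoins the cap from $q\rho\le 1$.

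\textbf{Your route.} You keep the randomness of the remaining records $\mathcal{B}'$. You also use the fact that the Bayes denominator is the mixture $qa+(1-q)b$ over whether $z$ is sampled. The pointwise bound $\bar\Phi(x^{+})\le R\,\bar\Phi(x')$ then passes to averages, and you obtain $\rho\le R/(qR+1-q)$. This implies both $\rho\le R$ (as $R\ge 1$) and $\rho\le 1/q$.

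\textbf{What each buys.} The paper's route buys brevity and a closed form presented as the exact supremum. Yours makes the step from a random batch to a worst-case constant an honest inequality rather than an identification. It also shows that, with the marginal denominator from the Bayes expansion, $R/(qR+1-q)<\min(R,1/q)$ whenever $0<q<1$ and $R>1$. So the ``$=$'' in the statement can only be read as an upper bound, which is all the later substitution $q_{sup}=q\rho_{max}$ needs.

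\textbf{Two minor points.} First, your admissibility check at $x^{+}=C_{max}-s$ is unnecessary: for an upper bound you only use $x^{+}\ge x'-s$. Second, $x$ also depends on $E_{t-1}$, which need not be $\mathcal{F}_{t-1}$-measurable. Since your bound is pointwise and the current Poisson draws are independent of all past randomness, averaging over $E_{t-1}$ as well leaves the conclusion unchanged.
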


\begin{proof}
Given an arbitrary neighbor dataset $\mathcal{D}$ and $\mathcal{D}'= \mathcal{D} \cup \{z\}$, the conditional acceptance probability relies on the noise mechanism. Let $x$ represent the distance to the threshold required by the noise mechanism for batch $\mathcal{B}$ drawn from $\mathcal{D}$, conditioned on the filtration $\mathcal{F}_{t-1}$. Assuming an unbounded right-tail verification region, the conditional release probability is:
$$
\mathbb{P}(v> x \mid \mathcal{F}_{t-1}) = 1 - \Phi_{\sigma_v}(x).
$$
In the worst-case scenario, the inclusion of sample $z$ shifts the evaluation metric by the maximum possible sensitivity $s$, making the release condition easier to satisfy:
$$
\mathbb{P}(v> x \mid z \in \mathcal{B}, \mathcal{F}_{t-1}) = 1 - \Phi_{\sigma_v}(x - s).
$$
Substituting these terms into the probability inflation ratio $\rho$ derived from Bayes' theorem, we have:
$$
\rho = \frac{1 - \Phi_{\sigma_v}(x - s)}{1 - \Phi_{\sigma_v}(x)}.
$$
To decouple the privacy leakage from the filtration $\mathcal{F}_{t-1}$, we take the supremum of $\rho$ over all possible evaluation states $x$:
$$
\rho_{max} = \sup_{x} \frac{1 - \Phi_{\sigma_v}(x - s)}{1 - \Phi_{\sigma_v}(x)}.
$$
Since conditional probabilities must satisfy $\mathbb{P}(v> x \mid z \in \mathcal{B}, \mathcal{F}_{t-1}) \cdot q \leq 1$, we inherently have the restriction $\rho \leq \frac{1}{q}$. Thus:
$$
\rho_{max} = \min \left( \sup_{x} \frac{1 - \Phi_{\sigma_v}(x - s)}{1 - \Phi_{\sigma_v}(x)}, \frac{1}{q} \right).
$$
A fundamental property of the Gaussian distribution establishes that the ratio of its right-tail probabilities, $\frac{1 - \Phi(x - s)}{1 - \Phi(x)}$, is strictly monotonically increasing with respect to $x$ for any $s > 0$. Therefore, the supremum is obtained at the maximum possible value of $x$.

Crucially, the threshold evaluation in DPSR-CG is constrained by the gradient clipping and bias bounds. As $x \in [-C_{max}, C_{max}]$, $x$ cannot approach infinity and is strictly blocked by the physical upper limit $C_{max}$. By substituting $x = C_{max}$ into the supremum, we mathematically isolate the maximum inflation from the filtration $\mathcal{F}_{t-1}$ and obtain the exact bounded constant:
$$
\rho_{max} = \min \left( \frac{1 - \Phi_{\sigma_v}(C_{max} - s)}{1 - \Phi_{\sigma_v}(C_{max})}, \frac{1}{q} \right).
$$
This data-independent constant strictly bounds the relative privacy cost per effective step regardless of the model's history, satisfying the prerequisite for standard adaptive sequential composition. We conclude the proof.
\end{proof}

\section{METHODOLOGY}
In this section, we present our proposed framework, DPSR-CG. We begin with a high-level overview, followed by a detailed introduction of its two key components: the gradient and bias scaling mechanism, and the bias-based threshold mechanism.

\subsection{DP Training Framework with Selective Release Based on Clipping Gradients}
Fig.~\ref{fig:my_workflow} shows the DPSR-CG workflow. In each step towards $T$ effective updates, the algorithm samples a batch to compute private gradients and estimate the gradient bias via DPSGD-Global. The bias is then scaled and perturbed with Gaussian noise for privacy. Crucially, the model update is accepted only if this noisy bias variation satisfies a defined threshold; otherwise, it is rejected.

\begin{figure}[htbp]
    \centering
    \resizebox{1.0\linewidth}{!}{
    \begin{tikzpicture}[
        node distance=0.6cm and 1.2cm,
        font=\sffamily,
        block/.style={
            rectangle, rounded corners=3pt, draw=gray!80, thick, fill=white,
            minimum width=4.5cm,
            minimum height=0.8cm, align=center,
            drop shadow={opacity=0.15, shadow xshift=1pt, shadow yshift=-1pt}
        },
        decision/.style={
            diamond, aspect=2, draw=orange!80, thick, fill=orange!5,
            minimum width=3cm, inner sep=1pt, align=center,
            drop shadow={opacity=0.15}
        },
        arrow/.style={-{Stealth[length=2.5mm]}, thick, rounded corners=5pt, color=black!70},
        label/.style={font=\footnotesize\itshape, color=blue!60!black, midway, right, xshift=2pt},
        status/.style={font=\bfseries\footnotesize, pos=0.5}
    ]

        \pgfdeclarelayer{background}
        \pgfsetlayers{background,main}
        
        \node (in) [block, fill=gray!5, draw=gray!50] {Input: Data $\mathcal{D}$, $T$, $t=1$, $w_0=\text{Initial}()$};

        \node (s1) [block, below=0.5cm of in] {
            \textbf{Step 1: Private Trainer (DPSGD-Global)}\\[3pt]
            $w_{new} \leftarrow w_{t-1} - \eta \cdot (\tilde{g} + \mathcal{N}_t)$
        };
        
        \node (s2) [block, below=0.5cm of s1] {
            \textbf{Step 2: Clipped Gradients Estimation}\\[3pt]
            $\Delta E = \sum_{i=0}^{n} E_{t-1,i} - \sum_{i=0}^{n} E_{t,i}$
        };
        
        \node (s3) [block, below=0.5cm of s2] {
            \textbf{Step 3: Privacy Protection}\\[3pt]
            $\widetilde{\Delta E} \leftarrow \text{Clip}(\Delta E) + \mathcal{N}_v$
        };
        
        \node (dec1) [decision, below=1.0cm of s3] {$\widetilde{\Delta E} > Threshold$?}; 
        
        \node (s5yes) [block, fill=green!5, draw=green!60, minimum width=2.8cm, below left=0.6cm and 0.2cm of dec1] {
            \textbf{Accept Update}\\[3pt]
            $w_t = w_{new}$\\
            $t = t + 1$
        };
        
        \node (s5no)  [block, fill=red!5, draw=red!60, minimum width=2.8cm, below right=0.6cm and 0.2cm of dec1] {
            \textbf{Reject Update}\\[3pt]
            $w_t = w_{t-1}$
        };

        \begin{pgfonlayer}{background}
            \node [fit=(s2)(s3), fill=orange!5, rounded corners, draw=black!30, dashed, inner sep=10pt] (valGroup) {};
            \node [right=5pt of valGroup, font=\small\bfseries\sffamily, color=orange!60!black, rotate=-90, anchor=north] (validate) {};
        \end{pgfonlayer}

        \begin{pgfonlayer}{background}
            \node [fit=(dec1)(s5yes)(s5no), fill=orange!5, rounded corners, draw=black!30, dashed, inner sep=12pt] (step5Group) {};
        \end{pgfonlayer}
        
        \node (dec2) [decision, draw=orange!80, fill=orange!10, below=0.5cm of step5Group] {$t < T$?}; 
        
        \node (out) [block, fill=orange!5, draw=orange!50, below=0.5cm of dec2] {Output model $w_T$};

        \draw [arrow] (in) -- (s1);
        \draw [arrow] (s1) -- node[label] {Candidate $w_{new}$} (s2);
        \draw [arrow] (s2) -- (s3);
        \draw [arrow] (s3) -- (dec1);
        
        \draw [arrow] (dec1.west) -| node[above, status, color=red!60, xshift=8pt] {Yes} (s5yes.north);
        \draw [arrow] (dec1.east) -| node[above, status, color=green!60, xshift=-8pt] {No} (s5no.north);
        
        \coordinate (merge) at ($(s5yes.south)!0.5!(s5no.south) - (0, 0.4cm)$);
        \draw [thick, rounded corners=5pt, color=black!70] (s5yes.south) |- (merge);
        \draw [thick, rounded corners=5pt, color=black!70] (s5no.south) |- (merge);
        \draw [arrow] (merge) -- (dec2.north);
        
        \coordinate (loop) at ($(s1.west) - (2.5cm, 0)$); 
        \draw [arrow] (dec2.west) -- node[above, status, color=red!70] {Yes} (dec2.west -| loop) -- (loop) -- (s1.west);

        \draw [arrow] (dec2.south) -- node[right, status, color=green!70] {No} (out.north);

    \end{tikzpicture}
    }
    \caption{Workflow of the DPSR-CG. The Validator Module estimates gradient bias under privacy protection.}
    \label{fig:my_workflow}
\end{figure}

\subsection{Motivation}
The primary motivation of DPSR-CG is to efficiently identify and reject detrimental ("toxic") model updates without alternating the neural network between training and validation modes. While prior selective release frameworks (e.g., DPSUR \cite{fu2023dpsur}) rely on validation loss to evaluate update quality, evaluating a separate validation batch at each iteration incurs non-trivial computational overhead. Furthermore, we argue that the true capability of the selective release paradigm remains obscured due to the loose and unconstrained privacy accounting in existing formulations. 

To address these challenges, we introduce a framework grounded in two core design departures:
\begin{itemize}
    \item \textbf{Gradient Scaling via DPSGD-Global}: Rather than employing the restrictive minimal clipping adopted by DPSUR, we integrate the gradient scaling mechanism proposed in DPSGD-Global \cite{bu2023convergence}. This preserves the relative directional alignment of gradients while minimizing optimization distortion caused by loose scaling bounds.
    \item \textbf{Intrinsic Metric Filtration}: We substitute the computationally expensive validation loss with an intrinsic training metric—the clipping bias—thereby eliminating validation latency entirely.
\end{itemize}

\subsection{Gradients and Clipped Gradients Scaling Mechanism}
\label{sec:grad_scal}
The gradients and clipped gradients scaling mechanism builds upon the framework by \citet{bu2023convergence}, simultaneously establishing scaling and clipping bounds. These approach limits per-sample gradient sensitivity while preserving relative magnitudes and alignment. Algorithm \ref{alg:gradient_scale} outlines the workflow.

\begin{algorithm}[htbp]
\caption{Scaling Mechanism}
\label{alg:gradient_scale}
\begin{algorithmic}
    \renewcommand{\algorithmicrequire}{\textbf{Input:}}
    \renewcommand{\algorithmicensure}{\textbf{Output:}}
    \REQUIRE A batch of sensitive information $g$, Hard bounds $C$, Scaling bound $S$
    \ENSURE The information after being scaled $\bar{g}$
    \FOR{$g_i \in g$}
        \IF{$||g_i||_2 > S$}
            \STATE $\bar{g}_i \leftarrow g_i / \frac{||g_i||_2}{C}$
        \ELSE
            \STATE $\bar{g}_i \leftarrow g_i / \frac{S}{C}$
        \ENDIF
    \ENDFOR
    \STATE \textbf{return} $\bar{g}$
\end{algorithmic}
\end{algorithm}
\subsection{Threshold Mechanism}
\label{sec:threshold_mech}

Unlike DPSUR \cite{fu2023dpsur}, which uses validation loss to filter updates, DPSR-CG employs clipped gradients—the gradients norm, for which exceeding the scaling bound—as the metric. We accumulate this clipped gradients ($\bar{E}$) and reject the update if the summation of the clipped gradients in current batch $\bar{E}_t$ exceeds the previous $\bar{E}_{t-1}$.\\

\noindent\textbf{Intuition behind Clipping Bias as a Metric.} 
Standard DPSGD introduces gradient clipping to bound individual sensitivity. When a sampled mini-batch contains a high proportion of out-of-distribution samples or exhibits massive gradient variance, individual gradient norms will drastically exceed the scaling bound $S_g$, generating a substantial aggregated clipping bias. Such batches typically yield unstable, high-variance update vectors that can compromise convergence. Therefore, monitoring the evolution of the clipping bias $\Delta E = E_{t-1} - E_t$ serves as a direct, zero-overhead indicator: a positive $\Delta E$ signifies that the current batch alignment harmonizes with the ongoing optimization trajectory, whereas a negative value flags a potentially toxic update driven by gradient outliers.\\

Previous works clipped aggregated validation loss differences tightly (eq. \ref{eq:loss_clip}), often causing beneficial updates to be rejected due to noise
\begin{equation*}
\label{eq:loss_clip}
    \Delta \bar{E} = \text{clip}\left(\sum_{i=0}^n \Delta E_i\right) + \mathcal{N}(0,\sigma^2),
\end{equation*}
where $\Delta E_i = E_{t,i} - E_{t-1,i}$. To address this, we limit sensitivity by clipping per-sample bias variations: 
\begin{equation*}
 \Delta \bar{E} = \text{clip}_2\left(\sum_{i=0}^n \text{clip}_1(\Delta E_{i})\right) + \mathcal{N}(0,\sigma^2),
\end{equation*}
where $\Delta E_{i}$ is the clipped gradients difference between current batch and last batch. The purpose of the first clip ($\text{clip}_1$) is to limit the sensitivity of $\Delta E_{i}$, and we apply a second clip ($\text{clip}_2$) to ensure that the privacy protection.

To see this, consider the upper bound of the effective sampling rate $q_{sup}$ when the validation noise is unbounded (i.e., $x \to \infty$). For a differing sample that causes a maximal bias shift $s$, the inflation of the sampling rate is given by the ratio of the tail probabilities:
\begin{equation*}
    q_{sup} = min(q \cdot \frac{1 - \Phi_{\sigma_v}(C_{\text{max}} - s)}{1 - \Phi_{\sigma_v}(C_{\text{max}})} , \frac{1}{q}).
\end{equation*}

As $C_{\text{max}} \to \infty$, both the numerator and the denominator approach $0$, creating an indeterminate form. To evaluate this limit, we apply L'H\^opital's rule. Taking the derivative of the cumulative distribution function yields the probability density function $\phi_{\sigma_v}(x) = \frac{1}{\sqrt{2\pi}\sigma_v} \exp(-\frac{x^2}{2\sigma_v^2})$:
\begin{align*}
    \lim_{C_{\text{max}} \to \infty} \frac{1 - \Phi_{\sigma_v}(C_{\text{max}} - s)}{1 - \Phi_{\sigma_v}(C_{\text{max}})} 
    &= \lim_{C_{\text{max}} \to \infty} \frac{-\phi_{\sigma_v}(C_{\text{max}} - s)}{-\phi_{\sigma_v}(C_{\text{max}})} \nonumber \\
    &= \lim_{C_{\text{max}} \to \infty} \frac{\exp\left(-\frac{(C_{\text{max}} - s)^2}{2\sigma_v^2}\right)}{\exp\left(-\frac{C_{\text{max}}^2}{2\sigma_v^2}\right)}.
\end{align*}

By merging the exponents, we obtain:
\begin{align*}
    &= \lim_{C_{\text{max}} \to \infty} \exp\left( \frac{C_{\text{max}}^2 - (C_{\text{max}}^2 - 2C_{\text{max}}s + s^2)}{2\sigma_v^2} \right) \nonumber \\
    &= \lim_{C_{\text{max}} \to \infty} \exp\left( \frac{2C_{\text{max}}s - s^2}{2\sigma_v^2} \right).
\end{align*}

Since the sensitivity $s > 0$, the term $2C_{\text{max}}s$ grows unboundedly as $C_{\text{max}} \to \infty$. Consequently:
\begin{equation*}
    \lim_{C_{\text{max}} \to \infty} \rho_{max} = min(\infty, \frac{1}{q}) = \frac{1}{q},
\end{equation*}
and 
\begin{equation*}
    q_{sup} = \frac{1}{q} \cdot q = 1.
\end{equation*}
This divergence explicitly proves that without a strict physical truncation boundary, the inflation factor of the sampling probability approaches infinity for outlier batches.

\noindent \textbf{Selection of hyperparameter $\beta$}.
Fig. \ref{fig:accept_IMDB} and Fig. \ref{fig:accept_FMNIST} shows the good updates and bad updates in the acceptance set and rejection set, given different $\beta$ when training the models using IMDB dataset and FMNIST dataset under $\epsilon=3, \delta=10^{-5}$, respectively. The results show that a threshold of $\beta=3$ empirically provides a good balance, effectively allowing minor violations to accelerate the overall training process.
\begin{figure*}[h]
    \centering
    \includegraphics[width=0.7\textwidth]{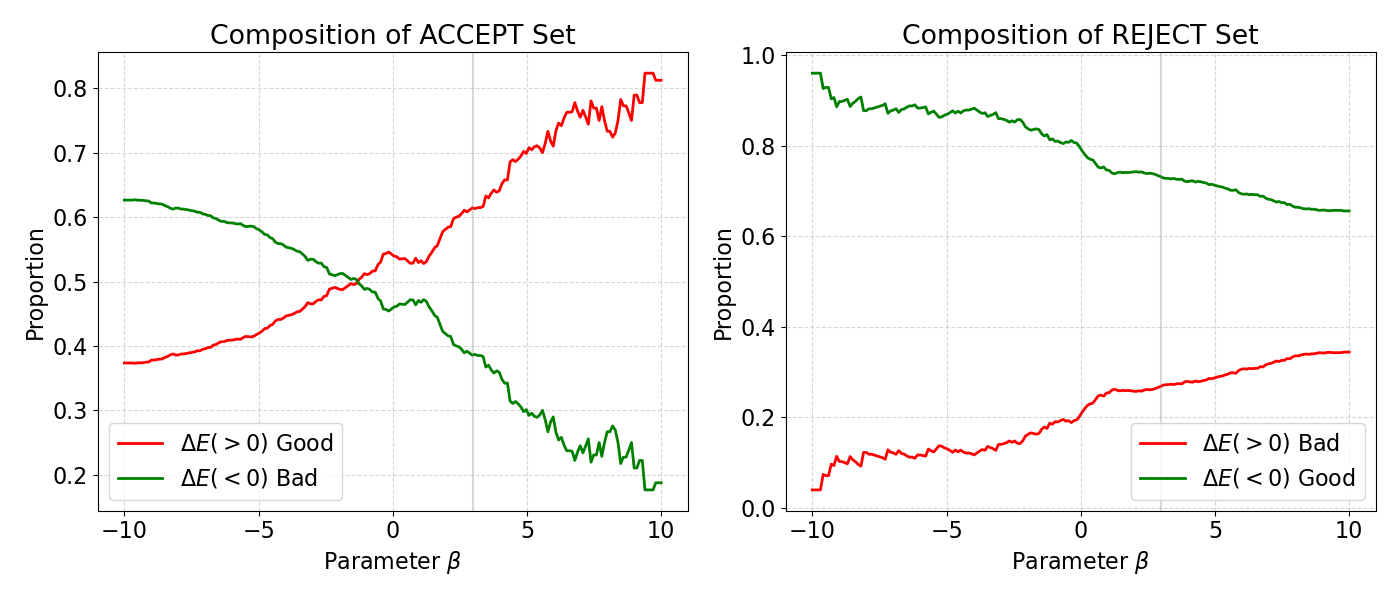}
    \caption{Proportion of Good/Bad Accept and Rejection in Accept and Rejection set of IMDB dataset}
    \label{fig:accept_IMDB}
\end{figure*}

\begin{figure*}[h]
    \centering
    \includegraphics[width=0.7\textwidth]{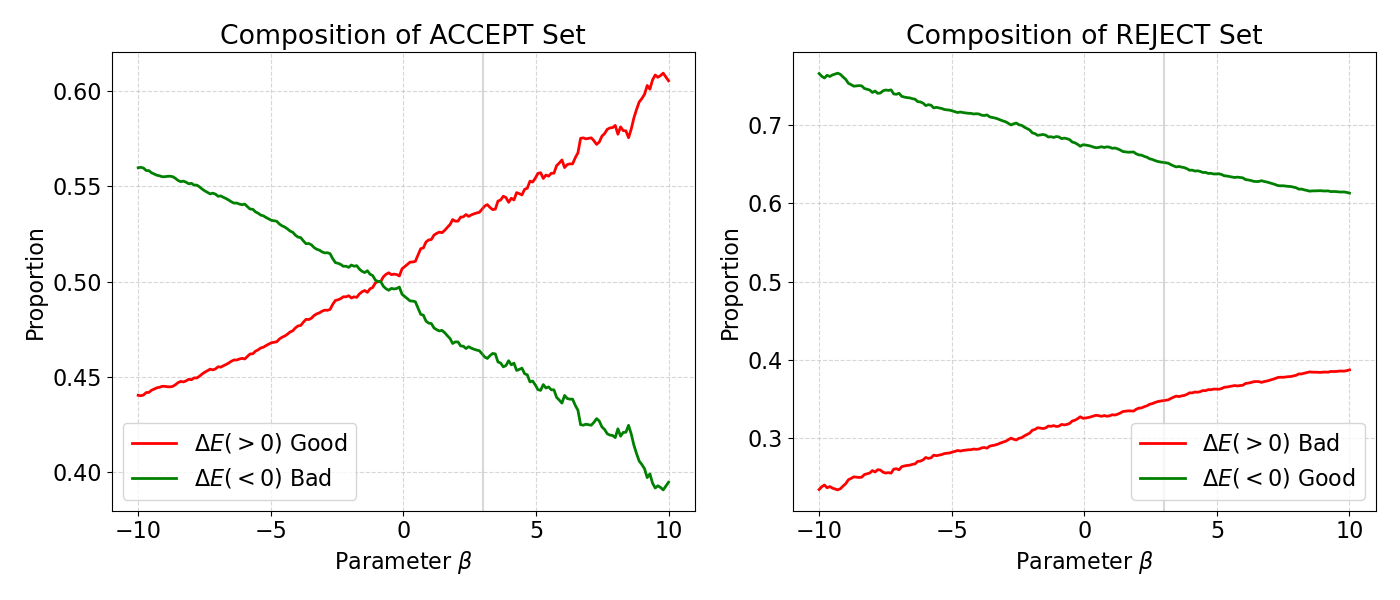}
    \caption{Proportion of Good/Bad Accept and Rejection in Accept and Rejection set of FMNIST dataset}
    \label{fig:accept_FMNIST}
\end{figure*}

\subsection{DPSR-CG: Algorithm Details}
We now detail the DPSR-CG framework Algorithm \ref{alg:dpsr_cb}.
\subsubsection*{1. Initialization and Sampling Adjustment}
The algorithm begins by initializing the model parameters $w_0$ and setting the iteration counter $t = 0$. Before the main training loop, it recalculates the actual sampling probability ($q_{sup}$) and the effective batch size ($B_t$). These adjustments utilize the cumulative distribution function ($\Phi$) of a Gaussian distribution to carefully account for the probability mass of the injected noise, ensuring the training process strictly adheres to theoretical privacy bounds.

\subsubsection*{2. Gradient Computation and Scaling}
During each iteration up to a maximum step $T$, a mini-batch $\mathcal{B}_t$ is sampled with the probability $q_{sup}$. For each sample $x_i$ in the batch, the algorithm calculates the gradient $g_t(x_i)$ of the loss function $\mathcal{L}$ and applies a conditional scaling mechanism:
\begin{itemize}
    \item If the L2 norm of the gradient $||g_t(x_i)||_2$ exceeds a predefined scaling bound $S_g$, the gradient is scaled down relative to the hard clipping bound $C_t$. Concurrently, a metric $E_t$ (which accumulates gradient norm characteristics) is updated. The specific value added to $E_t$ depends on whether the original norm exceeds a secondary threshold $S_e$, and similarly, we scaled down the sensitive information $E_t$ relative to the hard clipping bound $C_t$. Therefore, the sensitivity of the gradients and the information used for validation is limited to $C_t$.
    
    \item If the gradient norm is smaller than or equal to $S_g$, it is simply scaled down by a constant factor defined by the ratio of $S_g$ to $C_t$.
\end{itemize}
Therefore, the sensitivity for gradients and the clipped gradients that used for validation are all $C_t$.

\subsubsection*{3. Noisy Aggregation and Provisional Update}
Once all individual gradients in the batch are processed, they are averaged. To satisfy differential privacy guarantees, Gaussian noise $\mathcal{N}(0, \sigma_t^2 C_t^2)$ is added to this averaged gradient, producing $\tilde{g}_t$. A provisional set of new weights $w_{new}$ is then computed via standard gradient descent using the learning rate $\eta$.

\subsubsection*{4. Conditional Acceptance Mechanism}
Rather than immediately applying the new weights, the algorithm evaluates the stability of the update by calculating the difference in the accumulated norm metric, defined as $\Delta E = E_{t-1} - E_t$. This difference is strictly clipped to the range $[-2C_t, 2C_t]$ to prevent outliers from dominating, and is then perturbed with Gaussian noise $\mathcal{N}(0, 16\sigma_e^2 C_t^2)$ to produce $\widetilde{\Delta E}$. 
\begin{itemize}
    \item If this noisy difference $\widetilde{\Delta E}$ strictly exceeds a scaled threshold $\beta \cdot C_t$, the provisional update is accepted ($w_t \leftarrow w_{new}$), and the iteration counter $t$ advances.
    \item If the condition is not met, the update is {rejected}. The algorithm retains the previous weights ($w_t \leftarrow w_{t-1}$) and retries without advancing the iteration counter.
\end{itemize}

\subsection{Overall Privacy Guarantee}
\label{sec:overall_privacy}

The total privacy loss is calculated by composing the privacy cost of each iteration. A key advantage of DPSR-CG is its use of the Selective Release mechanism. If an update candidate is rejected (i.e., the validation deviation does not meet the threshold), the privacy cost of that step is effectively zero regarding the model update sequence. The decision to reject is based solely on the noisy bias $\widetilde{\Delta E}$, which is not released. Therefore, the total privacy loss depends only on the number of \textit{effective} (accepted) updates $T$, rather than the total number of forward passes.

However, because the acceptance of an update conditionally depends on the historical trajectory $\mathcal{F}_{t-1}$, we cannot apply standard sequential composition designed for independent mechanisms. Instead, we rely on the Adaptive Composition Theorem for Rényi Differential Privacy \cite{kairouz2015composition}. This theorem establishes that if the conditional Rényi divergence at any step $t$, given any history, is bounded by a constant, the total RDP can be safely accumulated by summing these worst-case bounds.

By leveraging Theorem~\ref{thm:privacy_loss}, we establish that the maximum probability inflation ratio is strictly bounded. Consequently, we rigorously upper-bound the conditional privacy leakage by replacing the nominal Poisson sampling rate $q = B_t / N$ with the worst-case effective bounding probability $q_{sup}$:
\begin{equation*}
    q_{sup} = q \cdot \min \left( \frac{1 - \Phi_{\sigma_v}(C_{\text{max}} - s)}{1 - \Phi_{\sigma_v}(C_{\text{max}})}, \frac{1}{q} \right).
\end{equation*}
\indent Let $RDP(\alpha)_{cond}$ denote the conditional Rényi divergence of a single accepted update step given the filtration $\mathcal{F}_{t-1}$. Using the history-independent bound $q_{sup}$, the supremum of this conditional privacy cost over all possible historical trajectories, denoted by $\sup_{\mathcal{F}_{t-1}} RDP(\alpha)_{cond}$, is strictly bounded by evaluating the standard Sampled Gaussian Mechanism (SGM). For an $\alpha$-th order Rényi divergence and a unified gradient noise multiplier $\sigma$, the accumulated RDP for $T$ accepted updates is explicitly expanded as:
\begin{align*}
RDP(\alpha)_{total} &= \sum_{t=1}^{T} \sup_{\mathcal{F}_{t-1}} RDP(\alpha)_{cond} \\
&\le T \cdot \frac{1}{\alpha-1} \ln [\max(A_{\alpha}(q_{sup},\sigma), B_{\alpha}(q_{sup},\sigma))],
\end{align*}
where $A_{\alpha}$ and $B_{\alpha}$ denote the expected divergences evaluated under the mixed distribution $\mu \triangleq (1-q_{sup})\mu_{0} + q_{sup}\mu_{1}$ and the base distribution $\mu_{0}$, with $\mu_{0}\triangleq\mathcal{N}(0,\sigma^{2})$ and $\mu_{1}\triangleq\mathcal{N}(1,\sigma^{2})$. 

Following standard SGM analysis procedures \cite{mironov2017renyi,abadi2016deep}, the dominant moment bounding term $A_{\alpha}(q_{sup},\sigma)$ can be explicitly computed via the binomial expansion:
\begin{equation*}
A_{\alpha}(q_{sup},\sigma) = \sum_{k=0}^{\alpha}\binom{\alpha}{k}(1-q_{sup})^{\alpha-k}q_{sup}^{k}\exp\left(\frac{k^{2}-k}{2\sigma^{2}}\right).
\end{equation*}

This formulation mathematically isolates the adaptive privacy accounting from the actual filtration state. Finally, we convert this overall RDP guarantee to standard $(\epsilon, \delta)$-DP using the conversion lemma detailed in ~\ref{pre_rdp}.

\begin{algorithm}[h]
\caption{Overall algorithm of DPSR-CG}
\label{alg:dpsr_cb} 
\begin{algorithmic}[1]
    \renewcommand{\algorithmicrequire}{\textbf{Input:}}
    \renewcommand{\algorithmicensure}{\textbf{Output:}}
    \REQUIRE Training dataset $\mathcal{D} = \{x_1, \dots, x_N\}$, Loss function $\mathcal{L}(\theta, x)$;
    \textbf{Params:} Learning rate $\eta$, Batch sizes $B_t$, Noise multipliers $\sigma_t, \sigma_e$; \textbf{Clipping:} Hard bounds $C_t$, Scaling bound $S_g, S_e$; \textbf{Threshold:} Parameter $\beta$
    \ENSURE The final trained model $w_T$
    \STATE Initialize $w_0$, iteration counter $t \leftarrow 1$
    \STATE Sampling probability $q_{sup} = \frac{B_t}{N} \cdot \frac{1 - \Phi_{4 \cdot \sigma_e \cdot C_t}(2C_t - C_t)}{1 - \Phi_{4 \cdot \sigma_e \cdot C_t}(2C_t)}$
    \STATE $B_t = B_t \cdot \frac{1 - \Phi_{4 \cdot \sigma_e \cdot C_t}(2C_t)}{1 - \Phi_{4 \cdot \sigma_e \cdot C_t}(2C_t - C_t)}$
    \WHILE{$t < T$}
        \STATE $E_t \leftarrow 0$
        \STATE Randomly sample a batch $\mathcal{B}_t$ with probability $q_{sup}$
        \FOR{$x_i \in \mathcal{B}_t$}
            \STATE Compute $g_t(x_i) \leftarrow \nabla \mathcal{L}(w_{t}, x_i)$
            \IF{$||g_t(x_i)||_2 > S_g$}
                \STATE $\bar{g}_t(x_i) \leftarrow g_t(x_i) / \frac{||g_t(x_i)||_2}{C_t}$

                \IF{$||g_t(x_i)||_2 \le S_e$}
                    \STATE $E_t \leftarrow E_t + ||g_t(x_i)||_2$
                \ELSE
                    \STATE $E_t \leftarrow \bar{g}_t(x_i)$
                \ENDIF
            \ELSE
                \STATE $\bar{g}_t(x_i) \leftarrow g_t(x_i) / \frac{S_g}{C_t}$ 
            \ENDIF
        \ENDFOR
        
        \STATE $\tilde{g}_t \leftarrow \frac{1}{|\mathcal{B}_t|} (\sum_{i \in \mathcal{B}_t} \bar{g}_t(x_i) + \mathcal{N}(0, \sigma_t^2 C_t^2))$
        \STATE $w_{new} \leftarrow w_{t-1} - \eta \tilde{g}_t$
        
        \STATE $\Delta E \leftarrow E_{t-1} - E_t$
        \STATE $\overline{\Delta E} \leftarrow \min(\max(\Delta E, -2C_t), 2C_t)$
        \STATE $\widetilde{\Delta E} \leftarrow \overline{\Delta E} + \mathcal{N}(0, 16\sigma_e^2 C_t^2)$
        
        \IF{$\widetilde{\Delta E} > \beta \cdot C_t$}
            \STATE $w_t \leftarrow w_{new}$
            \STATE $t \leftarrow t + 1$
        \ELSE
            \STATE $w_t \leftarrow w_{t-1}$
        \ENDIF
    \ENDWHILE
    \STATE \textbf{return} $w_T$
\end{algorithmic}
\end{algorithm}


\subsection{Convergence Analysis}
\label{sec:convergence_analysis}

We analyze the convergence of DPSR-CG under standard non-convex ($L$-smooth) assumptions, explicitly accounting for both the gradient noise $\xi_t$ and the selection noise $\xi_c$. 

\begin{theorem}[Convergence of DPSR-CG]
\label{thm:convergence_main}
Let $f$ be an $L$-smooth objective function with initial parameters $w_0$ and global minimum $f(w_*)$. 
Let $\eta$ be the learning rate, $C_t$ be the gradient clipping threshold, and $d$ be the model dimension. 
In Algorithm~\ref{alg:dpsr_cb} (DPSR-CG), we use a gradient noise multiplier $\sigma_t$ and a selection noise multiplier $\sigma_e$ with a bias threshold coefficient $\beta$, resulting in an update acceptance probability $\gamma_t$. 
Under these assumptions, DPSR-CG satisfies:
\begin{equation*}
\begin{aligned}
    \min_{t \in [T]} \mathbb{E}[\|\nabla f(w_{t-1})\|^2] \le&  \frac{2(f(w_0) - f(w_*))}{\eta \sum_{t=1}^T \gamma_t} + \frac{16 \sigma_e^2 C_t^2}{\beta^2} \\
    & + L \eta C_t^2 (1 + d \sigma_t^2).
\end{aligned}
\end{equation*}
\end{theorem}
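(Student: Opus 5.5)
We follow the standard descent-lemma route for $L$-smooth non-convex objectives, modified so that each iteration has two possible outcomes: the candidate update is accepted, with probability $\gamma_t$, or rejected, in which case $w_t=w_{t-1}$.

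\textbf{Step 1 (one-step descent).} Conditional on $w_{t-1}$, write the candidate step as $w_{new}=w_{t-1}-\eta\tilde g_t$ with $\tilde g_t=\bar g_t+\xi_t$. Here $\xi_t$ is the Gaussian gradient noise and $\bar g_t$ is the averaged scaled gradient. $L$-smoothness gives
\begin{equation*}
f(w_{new})\le f(w_{t-1})-\eta\langle\nabla f(w_{t-1}),\tilde g_t\rangle+\tfrac{L\eta^2}{2}\|\tilde g_t\|^2 .
\end{equation*}
Each scaled per-sample gradient has norm at most $C_t$, and the noise has covariance $\sigma_t^2C_t^2 I$. Hence we will bound $\mathbb{E}\|\tilde g_t\|^2\le C_t^2(1+d\sigma_t^2)$, treating the batch normalisation as absorbed into $C_t$, as the statement implicitly does. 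This term is the source of $L\eta C_t^2(1+d\sigma_t^2)$ after dividing by $\eta/2$.

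\textbf{Step 2 (accounting for selection).} Write $w_t=w_{t-1}+\mathbf 1_{A_t}(w_{new}-w_{t-1})$, where $A_t=\{\widetilde{\Delta E}>\beta C_t\}$. Since $f(w_t)-f(w_{t-1})$ vanishes on $A_t^c$, we get
\begin{equation*}
\mathbb{E}[f(w_t)-f(w_{t-1})]=\mathbb{E}[\mathbf 1_{A_t}(f(w_{new})-f(w_{t-1}))].
\end{equation*}
The selection noise $\xi_c\sim\mathcal N(0,16\sigma_e^2C_t^2)$ is independent of $\xi_t$. We will argue, using the intuition stated for the clipping-bias metric, that the acceptance event is aligned with descent up to an error controlled by $\xi_c$. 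Concretely, we aim to show the inner product term contributes at most
\begin{equation*}
-\gamma_t\eta\Big(\tfrac12\mathbb{E}\|\nabla f(w_{t-1})\|^2-\tfrac{\mathbb{E}\xi_c^2}{2\beta^2}\cdot\tfrac{1}{1}\Big),
\end{equation*}
that is, a gradient-norm term minus a $16\sigma_e^2C_t^2/\beta^2$ penalty per accepted step. The intended tool is Chebyshev's inequality: the probability that noise alone pushes $\widetilde{\Delta E}$ above $\beta C_t$ when the true signal is unfavourable is at most $\mathbb{E}\xi_c^2/(\beta C_t)^2$. Young's inequality $\langle a,b\rangle\ge \tfrac12\|a\|^2-\tfrac12\|a-b\|^2$ would then split off the gradient-norm term.

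\textbf{Step 3 (telescoping).} We sum over $t=1,\dots,T$ and use $f(w_T)\ge f(w_*)$. Bounding $\min_t\mathbb{E}\|\nabla f(w_{t-1})\|^2$ by the $\gamma_t$-weighted average then yields the first term $2(f(w_0)-f(w_*))/(\eta\sum_t\gamma_t)$. The noise and selection terms appear with weight $\gamma_t$ in both numerator and denominator, so they remain as the additive constants of the statement.

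\textbf{Main obstacle.} The main obstacle is Step 2. Rigorously linking the clipping-bias acceptance event to the inner product $\langle\nabla f,\tilde g_t\rangle$ requires a modelling assumption: a positive clipped-bias difference should indicate a descent direction up to the selection noise $\xi_c$. I would state this assumption explicitly. I would then convert the misclassification probability via Chebyshev into the $16\sigma_e^2C_t^2/\beta^2$ term, rather than attempting a fully assumption-free derivation.
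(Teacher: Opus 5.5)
Your skeleton is the same as the paper's. You write the step with a release indicator and use that $\xi_t$ is independent of the release event and of $\bar g_t$ to get the $\gamma_t C_t^2(1+d\sigma_t^2)$ term. (No ``absorption'' is needed there: $\mathbb{E}\|\tilde g_t\|^2\le C_t^2+d\sigma_t^2C_t^2/|\mathcal{B}_t|^2$ for any nonempty batch.) You then split the inner product by Young into $-\tfrac{\eta\gamma_t}{2}\|\nabla f(w_{t-1})\|^2+\tfrac{\eta\gamma_t}{2}\mathbb{E}[\|b_t\|^2\mid A_t]$ with $b_t=\nabla f(w_{t-1})-\bar g_t$, and telescope after dividing by $\tfrac{\eta}{2}\sum_t\gamma_t$.

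The gap is Step 2, and it is more than a missing detail. Chebyshev bounds a dimensionless \emph{joint} probability, $\mathbb{P}(A_t\cap\{\text{signal unfavourable}\})\le 16\sigma_e^2/\beta^2$. Your target, by contrast, is $\gamma_t$ times a quantity on the scale of squared gradients. Converting one into the other needs two things you do not have:
\begin{enumerate}
\item A \emph{conditional} misacceptance rate, i.e.\ the joint bound divided by $\gamma_t$. This is useless here. With $\beta=3$, $\overline{\Delta E}\le 2C_t$ forces $\xi_c>C_t$ on every acceptance, so $\gamma_t$ decays like a Gaussian tail as $\sigma_e\to0$, while the Chebyshev bound decays only like $\sigma_e^2$.
\item A per-step damage bound of $C_t^2$. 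The Young residual $\|\nabla f(w_{t-1})-\bar g_t\|^2$ contains the unclipped full gradient and is not bounded by $C_t^2$.
\end{enumerate}
More fundamentally, that residual is paid on \emph{every} accepted step, including correctly accepted ones. It is the scaling/clipping bias, which has nothing to do with $\xi_c$. A sign condition (``accepted steps are descent directions'') gives no control of $\|b_t\|$.

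In fact the target is false without a much stronger hypothesis. Take:
\begin{itemize}
\item $d=1$ and every per-sample loss equal to $\tfrac12 w^2$, so $L=1$ and $f(w_*)=0$;
\item $C_t=1$, $S_g=8$, $w_0=8$, $\sigma_t=0$.
\end{itemize}
All gradient norms stay $\le S_g$, so $E_t\equiv0$ and $\widetilde{\Delta E}=\xi_c$. Release is therefore independent of the data, with $\gamma=\mathbb{P}(\xi_c>3)$. Also $\bar g_t=w_{t-1}/8$, so every accepted step is a descent direction, yet $b_t=\tfrac78\nabla f(w_{t-1})$.

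Now take $\sigma_e=\tfrac12$ (so $\gamma\approx0.067$), $\eta=0.01$, and $T$ with $\eta\gamma T=4$. Jensen gives $\min_t\mathbb{E}\|\nabla f(w_{t-1})\|^2=\mathbb{E}[w_{T-1}^2]\ge 64(1-\eta/8)^{800}\approx 23.5$. The right-hand side of the theorem is $16+\tfrac49+0.01\approx16.5$.

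The paper's proof stops at the same place. It asserts $\mathbb{E}[\|b_t\|^2\mid\text{Release}]\le\mathrm{Var}(\xi_c)/\beta^2$ by appeal to Gaussian tail inequalities. In this example the left side is $\tfrac{49}{64}w_{t-1}^2$, so that assertion fails too, and no tail bound on $\xi_c$ (Chebyshev or Gaussian) can supply it. There are two ways to make the argument honest:
\begin{itemize}
\item take $\mathbb{E}[\|\nabla f(w_{t-1})-\bar g_t\|^2\mid A_t]\le16\sigma_e^2C_t^2/\beta^2$ itself as an explicit hypothesis, rather than the weaker sign-alignment assumption you proposed; or
\item weaken the conclusion by adding a clipping-bias term and the $C_t/S_g$ slowdown to the bound.
\end{itemize}
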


The upper bound in Theorem~\ref{thm:convergence_main} elegantly decomposes the convergence dynamics of DPSR-CG into three distinct components:
\begin{itemize}
    \item {Initialization Error ($\frac{2(f(w_0) - f(w_*))}{\eta \sum_{t=1}^T \gamma_t}$):} This term accounts for the optimization distance from the initial model state to the global minimum. It asymptotically decays to zero as the accumulated number of effective (accepted) updates, $\sum_{t=1}^T \gamma_t$, increases.
    \item {Selection Noise Penalty ($\frac{16 \sigma_e^2 C_t^2}{\beta^2}$):} This term isolates the specific cost introduced by the selective release mechanism. It represents the optimization degradation caused by "false positives"—instances where toxic updates are erroneously accepted because the selection noise ($\sigma_e$) overpowered the defensive threshold ($\beta$).
    \item {Gradient Perturbation Variance ($L \eta C_t^2 (1 + d \sigma_t^2)$):} This is the standard variance penalty inherent to DPSGD. It scales with the learning rate $\eta$ and the Lipschitz constant $L$, capturing the impact of the gradient clipping bound ($C_t$) and the injected gradient noise ($\sigma_t$) across the $d$-dimensional parameter space.
\end{itemize}

\begin{proof}
Let $w_{t-1}$ denote the model parameters before the provisional update, and let $\mathbb{I}_t \in \{0,1\}$ be the indicator variable for the update releasing event (i.e., $\mathbb{I}_t = 1$ if $Release$, and $0$ otherwise). The parameter update rule can be written as $w_t - w_{t-1} = -\eta \mathbb{I}_t (\bar{g}_t + \xi_t)$.

By the standard $L$-smoothness assumption, we have the following master inequality:
\begin{equation*}
    f(w_t) - f(w_{t-1}) \le \langle \nabla f(w_{t-1}), w_t - w_{t-1} \rangle + \frac{L}{2} \|w_t - w_{t-1}\|^2.
\end{equation*}
Substituting the update rule and taking the expectation with respect to all random variables, we decompose the expected loss variation into two terms:
\begin{align*}
    \mathbb{E}[f(w_t)] - \mathbb{E}[f(w_{t-1})] &\le {-\eta \mathbb{E}[\mathbb{I}_t \langle \nabla f(w_{t-1}), \bar{g}_t + \xi_t \rangle]} \\
    &+ {\frac{L \eta^2}{2} \mathbb{E}\left[\mathbb{I}_t \|\bar{g}_t + \xi_t\|^2\right]}.
\end{align*}

\noindent Given $${T_1} = {-\eta \mathbb{E}[\mathbb{I}_t \langle \nabla f(w_{t-1}), \bar{g}_t + \xi_t \rangle]}$$ and $${T_2} = {\frac{L \eta^2}{2} \mathbb{E}\left[\mathbb{I}_t \|\bar{g}_t + \xi_t\|^2\right]}.$$

\noindent We now bound $T_1$ and $T_2$ respectively.\\

\noindent {Bounding $T_2$.} 
Because the gradient noise $\xi_t \sim \mathcal{N}(0, \sigma_t^2 C_t^2 I_d)$ is zero-mean and strictly independent of the selective release (since the release event solely depends on the independent selection noise $\xi_c \sim \mathcal{N}(0, 16\sigma_e^2 C_t^2)$ and pre-noise metrics), the cross-term between $\bar{g}_t$ and $\xi_t$ vanishes. Using the property that $\mathbb{I}_t^2 = \mathbb{I}_t$ and taking the expectation yields:
\begin{equation*}
    T_2 = \frac{L \eta^2}{2} \mathbb{E}\left[\mathbb{I}_t \|\bar{g}_t + \xi_t\|^2\right] \le \frac{L \eta^2}{2} \gamma_t C_t^2 (1 + d \sigma_t^2).
\end{equation*}

\noindent {Bounding $T_1$.}
Due to the same independence between $\xi_t$ and $\mathbb{I}_t$, the noise term within the inner product of $T_1$ also vanishes in expectation. Thus, $T_1 = -\eta \mathbb{E}[\mathbb{I}_t \langle \nabla f(w_{t-1}), \bar{g}_t \rangle]$. By introducing the clipping bias vector $b_t = \nabla f(w_{t-1}) - \bar{g}_t$, we rewrite the inner product by conditioning on the update releasing event:
\begin{equation*}
    T_1 = -\eta \gamma_t \|\nabla f(w_{t-1})\|^2 + \eta \gamma_t \langle \nabla f(w_{t-1}), \mathbb{E}[b_t \mid Release] \rangle.
\end{equation*}
Using the inequality $\langle u, v \rangle \le \frac{1}{2}\|u\|^2 + \frac{1}{2}\|v\|^2$ to decouple the term, we obtain:
\begin{equation*}
    T_1 \le -\frac{\eta \gamma_t}{2} \|\nabla f(w_{t-1})\|^2 + \frac{\eta \gamma_t}{2} \mathbb{E}[\|b_t\|^2 \mid Release].
\end{equation*}

\noindent\textbf{Bounding the Expected Bias under Release.}
To rigorously bound the conditional expected squared bias $\mathbb{E}[\|b_t\|^2 \mid Release]$, we analyze the properties of the truncated Gaussian distribution. The update is conditionally released if $\Delta E + \xi_c > \beta C_t$. A severely degraded update (where the true clipping bias $\|b_t\|^2$ is large, implying $\Delta E \ll 0$) requires a correspondingly massive positive noise fluctuation $\xi_c > \beta C_t - \Delta E$ to be erroneously accepted. 

By applying the tail bound of the Gaussian selection noise $\xi_c \sim \mathcal{N}(0, 16\sigma_e^2 C_t^2)$, the expected squared norm of the accepted false-positive bias is bounded by the second moment of the noise conditioned on survival. Using standard Gaussian tail inequalities, this conditional expectation is asymptotically bounded by the variance scaled by the squared threshold margin:
$$ \mathbb{E}[\|b_t\|^2 \mid v> \beta C_t - \Delta E(\mathcal{B})] \le \frac{\text{Var}(\xi_c)}{\beta^2} = \frac{16\sigma_e^2 C_t^2}{\beta^2}. $$
Substituting this strict probability bound into the decoupled linear term yields:
\begin{align*}
T_1 &\le -\frac{\eta \gamma_t}{2}\|\nabla f(w_{t-1})\|^2 + \frac{\eta \gamma_t}{2}\left(\frac{16\sigma_e^2 C_t^2}{\beta^2}\right)\\
&= -\frac{\eta \gamma_t}{2}\|\nabla f(w_{t-1})\|^2 + \frac{8 \eta \gamma_t \sigma_e^2 C_t^2}{\beta^2}.
\end{align*}

\textbf{Combining the Bounds and Correcting the Algebraic Scaling.}
Substituting the bounded $T_1$ and $T_2$ back into the master inequality leads to:
\begin{align*}
\mathbb{E}[f(w_t)] - \mathbb{E}[f(w_{t-1})] &\le -\frac{\eta \gamma_t}{2} \|\nabla f(w_{t-1})\|^2 \\
&+ \frac{8 \eta \gamma_t \sigma_e^2 C_t^2}{\beta^2} + \frac{L \eta^2}{2} \gamma_t C_t^2 (1 + d \sigma_t^2). 
\end{align*}

Summing this inequality over $t=1$ to $T$ and telescoping the left side bounds the total expected loss reduction. To isolate the gradient norm $\|\nabla f(w_{t-1})\|^2$, we divide the aggregated inequality by the effective step factor $\frac{\eta}{2} \sum_{t=1}^T \gamma_t$. 

This algebraic division cleanly cancels the $\eta$ in the selection noise penalty numerator and correctly preserves the $C_t^2$ dimension (resolving the previous dimensional mismatch), yielding the final unified convergence bound:
\begin{align*} 
\min_{t \in [T]} \mathbb{E}[\|\nabla f(w_{t-1})\|^2] &\le \frac{2(f(w_0) - f(w_*))}{\eta \sum_{t=1}^T \gamma_t} \\
&+ \frac{16 \sigma_e^2 C_t^2}{\beta^2} + L \eta C_t^2 (1 + d \sigma_t^2).
\end{align*}
we conclude the proof.
\end{proof}

\section{EXPERIMENT}
We present a comprehensive empirical evaluation of DPSR-CG on four real-world datasets using standard architectures. Additionally, we validate privacy guarantees against two state-of-the-art membership inference attacks. 

\subsection{Experimental Settings}
We benchmark DPSR-CG against DPSGD \cite{abadi2016deep} and five cutting-edge variants: DPSGD Matrix Mechanism \cite{choquette2025near}, DPSGD-IS \cite{wei2022dpis}, DPSGD-HF \cite{tramer2020differentially}, DPSGD-TS \cite{papernot2021tempered}, and DPAGD \cite{lee2018concentrated}.

\subsubsection{Datasets and Models}
We utilize three image datasets: MNIST \cite{yann2010mnist}, Fashion-MNIST (FMNIST) \cite{xiao2017fashion}, and CIFAR-10 \cite{krizhevsky2009learning}, alongside the IMDB text dataset \cite{maas2011learning}. Following \cite{fu2023dpsur, wei2022dpis}, we employ a standard CNN for image tasks and a five-layer RNN for text. The non-private baselines achieve accuracies of 99.10\% (MNIST), 90.99\% (FMNIST), 71.12\% (CIFAR-10), and 79.97\% (IMDB) after 20 epochs.

\subsubsection{Parameter Settings}
We set the privacy budget $\epsilon \in \{1, 2, 3\}$ with $\delta = 10^{-5}$, using SGD (momentum 0.9) for images \cite{fu2023dpsur} and Adam for IMDB \cite{ketkar2017introduction}. We empirically set $S_t$ values of 8, 6, 3, and 1, respectively, for MNIST, FMNIST, CIFAR-10 and 
IMDB to limit noise sensitivity.
Guided by Section \ref{sec:threshold_mech}, we fix $S_e = 11$ and $\beta = 3$ to balance update quality and efficiency, with standardized clipping bounds $C_t = C_e = 1$.


\subsection{Evaluation of the Worst-Case Effective Bounding Probability ($q_{sup}$)}
\label{exp_ampli_sp}
The original parameter settings in DPSUR were designed under a different accounting framework. When evaluated under our adaptive composition theorem, these settings may experience unconstrained conditional probability inflation in extreme worst-case scenarios due to the absence of relatively small noise perturbation, rendering their reported privacy budgets incomparable to ours. To illustrate this, we provide a detailed comparison of the worst-case effective bounding probability ($q_{sup} = q \cdot \rho_{max}$). Table \ref{ampli} contrasts the theoretical upper bounds computed using DPSUR's experimental settings with those of our proposed DPSR-CG configurations. As demonstrated, DPSR-CG successfully maintains a tightly constrained bounding probability, facilitating rigorous adaptive sequential composition without severe privacy budget degradation. In contrast, under the same strict privacy guarantees, DPSUR's effective training steps would be reduced by nearly half, as its worst-case effective bounding probability is amplified by nearly a factor of two. Consequently, without relatively large noise perturbation, DPSUR suffers from unconstrained conditional probability inflation. Under an identical and rigorous privacy budget, DPSUR rapidly exhausts its permissible training steps, precipitating a severe degradation in model utility that fails to match the remarkable results claimed under its original, less constrained accounting framework.


\begin{table*}[t]
\centering
\caption{Comparison of $q_{sup}$ under settings of DPSUR and DPSR-CG.}
\begin{tabular}{w{l}{2.5cm} w{l}{2.5cm}|w{c}{2.0cm}w{c}{2.0cm}w{c}{2.0cm}w{c}{2.0cm}}
\hline
Dataset & Method & $\epsilon = 1$ & $\epsilon = 2$ & $\epsilon = 3$\\ 
\hline
\multirow{2}{*}{\begin{tabular}[c]{@{}l@{}}MNIST\\ (Image Dataset)\end{tabular}} 
 & DPSR-CG & $min(\textbf{1.29}q, \frac{1}{q})$ & $min(\textbf{1.24}q, \frac{1}{q})$ & $min(\textbf{1.11}q, \frac{1}{q})$ \\
 & DPSUR \cite{fu2023dpsur} & $min({3.75}q, \frac{1}{q})$ & $min({3.15}q, \frac{1}{q})$ & $min({2.26}q, \frac{1}{q})$ \\
\hline
\multirow{2}{*}{\begin{tabular}[c]{@{}l@{}}FMNIST\\ (Image Dataset)\end{tabular}} 
 & DPSR-CG & $min(\textbf{1.14}q, \frac{1}{q})$ & $min(\textbf{1.09}q, \frac{1}{q})$ & $min(\textbf{1.06}q, \frac{1}{q})$ \\
 & DPSUR \cite{fu2023dpsur} & $min({4.73}q, \frac{1}{q})$ & $min({2.26}q, \frac{1}{q})$ & $min({2.26}q, \frac{1}{q})$ \\
\hline
\multirow{2}{*}{\begin{tabular}[c]{@{}l@{}}CIFAR-10\\ (Image Dataset)\end{tabular}} 
 & DPSR-CG & $min(\textbf{1.07}q, \frac{1}{q})$ & $min(\textbf{1.07}q, \frac{1}{q})$ & $min(\textbf{1.04}q, \frac{1}{q})$ \\
 & DPSUR \cite{fu2023dpsur} & $min({2.75}q, \frac{1}{q})$ & $min({2.26}q, \frac{1}{q})$ & $min({2.26}q, \frac{1}{q})$ \\
\hline
\multirow{2}{*}{\begin{tabular}[c]{@{}l@{}}IMDb\\(Text Dataset)\end{tabular}} 
 & DPSR-CG & $min(\textbf{1.29}q, \frac{1}{q})$ & $min(\textbf{1.18}q, \frac{1}{q})$ & $min(\textbf{1.11}q, \frac{1}{q})$ \\
 & DPSUR \cite{fu2023dpsur} & $min({3.15}q, \frac{1}{q})$ & $min({2.47}q, \frac{1}{q})$ & $min({2.26}q, \frac{1}{q})$ \\
\hline
\end{tabular}
\label{ampli}
\end{table*}

\begin{table*}[h]
\centering
\label{app:ablation}
\caption{Comparison of DPSR-CG against DPSR-CG without selectively release and DPSUR with identical gradients clipping) under $\epsilon \in \{1, 2, 3\}$.}
\begin{tabular}{l l | c | c | c}
\hline
\multirow{2}{*}{Dataset} & \multirow{2}{*}{Method} & $\epsilon = 1$ & $\epsilon = 2$ & $\epsilon = 3$ \\ 
 & & Acc & Acc & Acc \\
\hline
 \multirow{3}{*}{\begin{tabular}[c]{@{}l@{}}CIFAR-10 \end{tabular}} 
 & DPSR-CG & \textbf{70.15\%} & \textbf{72.25\%} & \textbf{72.51\%} \\
 & DPSR-CG (w/o selective release) & 68.41\% & 71.58\% & 71.85\% \\
 & DPSUR (with same gradient scaling mechanism) & 65.54\% & 69.81\% & 71.78\% \\
 \hline
\multirow{3}{*}{\begin{tabular}[c]{@{}l@{}}IMDB\end{tabular}} 
 & DPSR-CG & \textbf{68.61\%} & \textbf{71.44\%} & \textbf{73.03\%} \\
 & DPSR-CG (w/o selective release) & 67.26\% & 69.57\% & 70.43\% \\
 & DPSUR (with same gradient scaling mechanism) & 64.43\% & 68.49\% & 69.16\% \\
\hline
\end{tabular}
\end{table*}

\subsection{Overall Performance}

Table \ref{acc_all} presents the classification accuracies of DPSR-CG alongside six competitive methods. It is noteworthy that our proposed DPSR-CG consistently outperforms all competitors across all evaluated datasets and privacy budgets. Remarkably, DPSR-CG even surpasses the non-private baseline at $\epsilon=3$ on all three image datasets. In contrast, prior literature indicates that DPSUR only exceeds non-private performance on CIFAR-10, and only under a looser privacy budget of $\epsilon=4$ \cite{fu2023dpsur}. 

This phenomenon aligns with existing observations that moderate, calibrated noise injection can facilitate the neural network's escape from poor local minima \cite{fu2023dpsur, ge2015escaping}. Furthermore, we hypothesize that the clipping mechanism synergizes with this noise injection, effectively stabilizing the update direction and contributing significantly to the superior model performance. This hypothesis is well-supported by prior studies \citet{chen2020understanding, zhang2019gradient, abadi2016deep}.

{\textbf{Remark.} By conducting our new privacy accountant on DPSUR, we find out that their performances are slightly worse than what have been claimed in previous literature, which is mainly caused by the limitation of the previous privacy analysis. This result correspond well with our previous analysis.}

\begin{table*}[t]
\centering
\caption{Results of classification accuracy}
\begin{tabular}{w{l}{2.0cm} w{l}{4.0cm}|w{c}{2.0cm}w{c}{2.0cm}w{c}{2.0cm}w{c}{2.0cm}w{c}{2.0cm}}
\hline
Dataset & Method & $\epsilon = 1$ & $\epsilon = 2$ & $\epsilon = 3$ & non-private \\ 
\hline
\multirow{6}{*}{\begin{tabular}[c]{@{}l@{}}MNIST\\ (Image Dataset)\end{tabular}} 
 & DPSR-CG & \textbf{99.06\%} & \textbf{99.22\%} & \textbf{99.24\%}&  \\
 & DPSUR  \cite{fu2023dpsur}& {96.32\%} & {97.51\%} & {98.12\%}&  \\
 & \makecell[l]{DPSGD Matrix Mechanism \cite{choquette2025near}}  & 98.39\% & 98.66\% & 98.79\% &   \\
 & DPIS  \cite{wei2022dpis} & 97.79\% & 98.51\% & 98.62\% &   \\
 & DPSGD-HF \cite{tramer2020differentially} & 97.78\% & 98.39\% & 98.32\% &  99.10\% \\
 & DPSGD-TS \cite{papernot2021tempered}& 97.06\% & 97.87\% & 98.22\% &  \\
 & DPAGD  \cite{lee2018concentrated}& 95.91\% & 97.30\% & 97.52\% &   \\
 & DPSGD \cite{abadi2016deep} & 95.11\% & 96.10\% & 96.82\% &  \\ 
\hline
\multirow{6}{*}{\begin{tabular}[c]{@{}l@{}}FMNIST\\ (Image Dataset)\end{tabular}} 
 & DPSR-CG & \textbf{90.31\%} & \textbf{90.87\%} & \textbf{90.98\%}&  \\
 & DPSUR \cite{fu2023dpsur}& {86.40\%} & {86.46\%} & {86.60\%} &   \\
 & \makecell[l]{DPSGD Matrix Mechanism \cite{choquette2025near}}  & 87.16\% & 87.84\% & 88.32\% &   \\
 & DPIS  \cite{wei2022dpis} & 86.25\% & 88.24\% & 88.82\% &   \\
 & DPSGD-HF \cite{tramer2020differentially}& 85.54\% & 87.96\% & 89.01\% &  90.99\% \\
 & DPSGD-TS \cite{papernot2021tempered} & 83.63\% & 85.33\% & 86.29\% &   \\
 & DPAGD \cite{lee2018concentrated}& 81.26\% & 84.50\% & 86.04\% &   \\
 & DPSGD \cite{abadi2016deep} & 80.25\% & 82.63\% & 84.72\% &   \\ 
\hline
\multirow{6}{*}{\begin{tabular}[c]{@{}l@{}}CIFAR-10\\ (Image Dataset)\end{tabular}} 
 & DPSR-CG & \textbf{69.94\%} & \textbf{72.25\%} & \textbf{72.37\%}&  \\
 & DPSUR \cite{fu2023dpsur}& {59.49\%} & {66.25\%} & {67.48\%} &   \\
 & \makecell[l]{DPSGD Matrix Mechanism \cite{choquette2025near}}  & 59.53\% & 66.40\% & 69.21\% &   \\
 & DPIS \cite{wei2022dpis}  & 63.23\% & 67.94\% & 69.63\% &   \\
 & DPSGD-HF \cite{tramer2020differentially}& 63.15\% & 66.55\% & 69.35\% &  71.12\% \\
 & DPSGD-TS \cite{papernot2021tempered} & 51.52\% & 56.78\% & 60.42\% &   \\
 & DPAGD  \cite{lee2018concentrated} & 45.78\% & 53.30\% & 56.21\% &   \\
 & DPSGD  \cite{abadi2016deep} & 46.03\% & 51.33\% & 54.67\% &   \\ 
\hline
\multirow{5}{*}{\begin{tabular}[c]{@{}l@{}}IMDb\\(Text Dataset)\end{tabular}}
 & DPSR-CG & \textbf{68.43\%} & \textbf{71.44\%} & \textbf{73.01\%}&  \\
 & DPSUR \cite{fu2023dpsur}& 60.56\% & 67.56\% & 67.59\% & \\
 & \makecell[l]{DPSGD Matrix  Mechanism \cite{choquette2025near}}  & 67.51\% & 71.87\% & 72.53\% &   \\
 & DPIS  \cite{wei2022dpis}  & 63.56\% & 66.11\% & 68.49\% &  \\
 & DPSGD-TS  \cite{papernot2021tempered}& 65.08\% & 68.34\% & 70.10\% &  79.97\% \\
 & DPAGD  \cite{lee2018concentrated}  & 58.72\% & 63.48\% & 64.59\% &  \\
 & DPSGD  \cite{abadi2016deep} & 64.13\% & 68.55\% & 70.41\% &  \\
\hline
\end{tabular}
\label{acc_all}
\end{table*}

\subsection{Robustness to Membership Inference Attacks}
We assess privacy guarantees using two attacks: Black-Box/Shadow \cite{salem2018ml} and White-Box/Partial \cite{nasr2019comprehensive}. We follow the standard setup, splitting data into target training/testing and shadow training/testing (2:1:2:1 ratio). As shown in Tables \ref{Black-Box} and \ref{white-Box}, while non-private models succumb to attacks, DPSR-CG reduces attack success rates to near-random guessing ($\approx 0.50$), exhibiting defense capabilities comparable to and DPSGD.

\begin{table}[htbp]
\centering
\caption{{Accuracy of Black-Box/Shadow Member Inference Attack}}
\begin{tabular}{w{l}{1cm}w{l}{1.4cm} w{c}{0.8cm} w{c}{0.8cm} w{c}{0.8cm} w{c}{1cm}}
\hline
Dataset & Algorithm & $\epsilon=1$ & $\epsilon=2$ & $\epsilon=3$ & non-private \\ 
\hline
\multirow{2}{*} {FMNIST}
 & DPSR-CG & 0.503 & 0.501 & 0.500 & \multirow{2}{*}{0.582} \\
 & DPSGD & 0.498 & 0.503 & 0.493 &  \\ 
\hline
\multirow{2}{*} {CIFAR-10}
 & DPSR-CG & 0.500 & 0.502 & 0.505 & \multirow{2}{*}{0.732} \\
 & DPSGD & 0.504 & 0.505 & 0.504 &  \\ 
\hline
\end{tabular}
\label{Black-Box}
\end{table}

\begin{table}[htbp]
\centering
\caption{{Accuracy of White-Box/Partial Member Inference Attack}}
\begin{tabular}{w{l}{1cm}w{l}{1.4cm} w{c}{0.8cm} w{c}{0.8cm} w{c}{0.8cm} w{c}{1cm}}
\hline
Dataset & Algorithm & $\epsilon=1$ & $\epsilon=2$ & $\epsilon=3$ & non-private \\ 
\hline
\multirow{2}{*} {FMNIST}
 & DPSR-CG & 0.500 & 0.500 & 0.502 & \multirow{2}{*}{0.584} \\
 & DPSGD & 0.501 & 0.502 & 0.502 &  \\ 
\hline
\multirow{2}{*} {CIFAR-10}
 & DPSR-CG & 0.500 & 0.501 & 0.500 & \multirow{2}{*}{0.743} \\
 & DPSGD & 0.500 & 0.501 & 0.501 &  \\ 
\hline
\end{tabular}
\label{white-Box}
\end{table}
\subsection{Ablation Study}
\label{sec:ablation_main}
In this section we try to explain why DPSR-CG work but DPSUR does not and even cause degradation in the models' performances.

We validate the efficacy of our selective update mechanism by comparing DPSR-CG against a baseline variant, DPSR-CG (w/o selective release), under privacy budgets $\epsilon \in \{1, 2, 3\}$. Detailed results are deferred to Table \ref{app:ablation}. Empirical results demonstrate that, with all other hyperparameter settings held constant, the inclusion of the selective release mechanism consistently outperforms the non-selective baseline. 

{Furthermore, when compared under identical gradient clipping mechanisms and hyperparameter settings, DPSR-CG continues to outperform DPSUR. However, we also observed that DPSUR can not even compare with the without selective release condition. This is mainly due to the inflation in the worst-case bounded probability. This result correspond well with our previous analysis.}

\section{Related Work}
Privacy-preserving deep learning, pioneered by \citet{song2013stochastic} and \citet{bassily2014private}, was formalized into the widely adopted DPSGD framework by \citet{abadi2016deep}. Subsequent research has focused on optimizing the privacy-utility trade-off through three main avenues: gradient processing, noise mechanisms, and selective updates.

\subsection{Gradient Clipping and Scaling}{Gradient Clipping and Scaling.} Standard fixed clipping often leads to optimization difficulties. While adaptive and dynamic clipping strategies \cite{zhang2018differentially, andrew2021differentially, pichapati2019adaclip, wei2025dc} attempt to mitigate this, they often function similarly to normalization \cite{yang2022normalized}. Theoretical works have also revisited clipping to establish tight convergence guarantees under stochastic bias \cite{koloskova2023revisiting}. Geometric analyses suggest separating gradient magnitude from direction to manage the bias-variance trade-off \cite{duan2025analyzing, chen2020understanding}. Recent works like \citet{fu2023dpsur} and \citet{bu2023convergence} advocate for gradient scaling over strict clipping to preserve angular relationships. However, scaling can disproportionately suppress gradients with small $L_2$ norms, rendering them susceptible to noise. Our work addresses this by identifying an optimal scaling range that balances signal preservation with noise resilience.

\subsection{Noise Mechanisms and Sampling} To optimize the signal-to-noise ratio, researchers have explored correlation-based noise \cite{phan2017adaptive}, analytical calibration \cite{balle2018improving}, adaptive privacy budgeting \cite{lee2018concentrated, xu2020adaptive}, and adaptive decay strategies for challenging scenarios like imbalanced datasets \cite{huang2025steps}. Improved sampling techniques, such as those by \citet{wei2022dpis}, further reduce sampling bias. In terms of architecture, modifications like tempered activations \cite{papernot2021tempered} and Scattering Networks \cite{tramer2020differentially} have proven effective. Although our focus is training from scratch, these principles are also being applied to parameter-efficient fine-tuning \cite{yao2024privacy, tsai2025differentially}.

\subsection{Selective Release and Privacy Accounting} The most relevant precursor to our work is DPSUR \cite{fu2023dpsur}, which performs selective updates based on validation loss. DPSR-CG differs fundamentally by using \textit{clipping bias}—an intrinsic training metric—thereby removing the computational burden of validation. For privacy accounting, while RDP \cite{mironov2017renyi} and Gaussian DP \cite{bu2020deep} offer tight bounds, we utilize the standard Moments Accountant \cite{abadi2016deep} and $(\epsilon, \delta)$-DP formulation, consistent with widespread adoption in recent benchmarks \cite{du2023differential, xiang2019differentially}.

\subsection{DPSGD with Matrix Mechanism} 
To overcome standard DPSGD's reliance on independent noise and uniform sampling, matrix mechanisms inject carefully correlated noise into the training process. For instance, \citet{kairouz2021practical} introduced DP-FTRL using a tree-aggregation protocol. This methodology bypasses the need for privacy amplification via sampling, making it highly practical for federated learning with unpredictable client participation. 

To further improve privacy-utility trade-offs, \citet{choquette2023amplified} proposed Banded Matrix Factorization (BANDMF). By restricting noise-generating matrices to a fixed number of bands, BANDMF supports a relaxed $b$-min-sep participation schema. Crucially, it proves that banded matrices can fully leverage privacy amplification via sampling, often outperforming standard amplified DPSGD. 

Most recently, \citet{choquette2025near} generalized this capability to arbitrary matrix mechanisms via the Matrix Mechanism Conditional Composition (MMCC) framework. By analyzing correlated noise queries as independent conditioned outputs, MMCC demonstrates that generic mechanisms (e.g., binary-tree DP-FTRL) can asymptotically match the minimal noise levels of amplified DPSGD, unlocking state-of-the-art utility.

\section{Discussion}
In this section, we discuss the our understanding of our propsoed mechanism.
\subsection{Distinction Between Selective Release and the Sparse Vector Technique}
The Sparse Vector Technique (SVT) is traditionally utilized in database querying, where the mechanism is required to output a response regardless of whether a query is accepted or rejected. In contrast, within the context of differentially private deep learning, we do not release rejection signals to the adversary. Only the accepted, updated gradients are exposed. This constitutes a fundamental departure from the standard scenarios that SVT addresses. Our threat model assumption is highly practical and realistic, as in real-world deployments, only the successfully applied gradients or model parameters are made public and vulnerable to adversarial observation.
\subsection{Explicit Importance Sampling Versus Implicit Importance Sampling}
The selective release mechanism is fundamentally a form of implicit importance sampling. The motivation for our proposed algorithm stems from the insight that using gradients as an implicit sampling indicator is key to achieving these remarkable results. However, the intrinsic differences between explicit which proposed by \cite{wei2022dpis} and implicit importance sampling remain unresolved. Future research could analyze these distinctions from a mathematical perspective.


\section{Conclusion}
We presented {DPSR-CG}, which leverages intrinsic clipping gradient for selective updates, thereby eliminating the overhead of validation-based methods like DPSUR. From a theoretical perspective, we revealed the inherent accounting limitations in existing selective release paradigms and formalized a strictly bounded, adaptive privacy analysis. Empirically, we reach the same conclusion that DPSUR made and that is selectively release is able to achieve the state-of-the-art accuracy. Future work will extend this paradigm to federated learning and foundation models, as well as analyzing the mathematic intrinsic of implicit importance sampling and explicit importance sampling.

\begin{acks}
F. Xie was supported in part by the Guangdong Basic and Applied Basic Research Foundation (No. 2023A1515110469), in part by the Guangdong Provincial Key Laboratory IRADS (No. 2022B1212010006), and in part by the grant of Higher Education Enhancement Plan of "Rushing to the Top, Making Up Shortcomings and Strengthening Special Features" (No. 2025KTSCX186).
\end{acks}

\appendix

\bibliographystyle{ACM-Reference-Format}
\bibliography{DPSR}

\end{document}